%% file: main.tex
\documentclass{article}

\input{settings}

\begin{document}

\title{Upper bounds on the number of causal pieces in spiking neural networks}

\author{
  Dominik Dold$^*$, Dennis Elbrächter$^*$, Philipp Petersen\\[2pt]
  \textit{\small{Faculty of Mathematics and Research Network Data Science @ Uni Vienna, University of Vienna}}\\[2pt]
  $^*$ \textit{\small{These authors contributed equally to this work.}}
}

\maketitle
\begin{abstract}
\input{content/abstract}
\end{abstract}

\section{Introduction}

\input{content/introduction}

\section{Related work}

\input{content/related}

\section{Background}

\input{content/methods}

\section{Combinatoric upper bounds for general spiking neuron models}\label{sec:combinatoric}

\input{content/proof_general}

\section{Derivation of hyperplane-based upper bound}
\input{content/hyperplanes}

\section*{Acknowledgments}

\input{content/acknowledgments}

\section*{Use of AI tools}

ChatGPT (OpenAI) was used iteratively for brainstorming, refinement of ideas, assistance with mathematical formalisation, and presentation. Parts of the manuscript were developed with its assistance and subsequently revised and verified by the authors. The authors take full responsibility for all mathematical statements, proofs, conclusions, and the final manuscript.

\printbibliography
\addcontentsline{toc}{section}{References}

\appendix
\section{Appendix}

\input{content/appendix}

\end{document}

%% file: settings.tex
\usepackage[activate={true,nocompatibility},final,tracking=true,kerning=true,spacing=true,factor=1100,stretch=10,shrink=10]{microtype}

\usepackage{arxiv}

\usepackage[font=small,labelfont={bf}, figurename=Figure, tablename=Table]{caption}

\usepackage[hidelinks]{hyperref}
\hypersetup{
  colorlinks   = true, 
  urlcolor     = esalightblue, 
  linkcolor    = esablue, 
  citecolor   = esablue 
}
\usepackage{amsfonts}       
\usepackage{amsmath}
\usepackage{amssymb}
\usepackage{amsthm}
\usepackage[utf8]{inputenc} 
\usepackage[T1]{fontenc}    
\usepackage{hyperref}       
\usepackage{url}            
\usepackage{booktabs}       
\usepackage{nicefrac}       
\usepackage{lipsum}
\usepackage{graphicx}
\usepackage{enumerate}
\graphicspath{ {./images/} }
\usepackage[nameinlink, capitalise]{cleveref}
\crefname{app}{appendix}{appendices}
\Crefname{app}{Appendix}{Appendices}
\usepackage{xcolor}
\usepackage{booktabs}
\definecolor{esablue}{RGB}{0,57,158}
\definecolor{esalightblue}{RGB}{0,155,219}
\definecolor{esared}{RGB}{150,1,54}
\usepackage{float}
\newcommand{\stirlingii}{\genfrac{\{}{\}}{0pt}{}}

\theoremstyle{plain}
\usepackage{amsthm}
\usepackage{aliascnt}

\newtheorem{theorem}{Theorem}[section]

\newaliascnt{proposition}{theorem}
\newtheorem{proposition}[proposition]{Proposition}
\aliascntresetthe{proposition}

\newaliascnt{lemma}{theorem}
\newtheorem{lemma}[lemma]{Lemma}
\aliascntresetthe{lemma}

\newaliascnt{corollary}{theorem}
\newtheorem{corollary}[corollary]{Corollary}
\aliascntresetthe{corollary}

\newaliascnt{assumption}{theorem}
\newtheorem{assumption}[assumption]{Assumption}
\aliascntresetthe{assumption}

\theoremstyle{definition}

\newaliascnt{definition}{theorem}
\newtheorem{definition}[definition]{Definition}
\aliascntresetthe{definition}

\theoremstyle{remark}

\newaliascnt{remark}{theorem}
\newtheorem{remark}[remark]{Remark}
\aliascntresetthe{remark}

\crefname{proposition}{Proposition}{Propositions}
\crefname{theorem}{Theorem}{Theorems}
\crefname{lemma}{Lemma}{Lemmas}
\crefname{corollary}{Corollary}{Corollaries}
\crefname{definition}{Definition}{Definitions}
\crefname{assumption}{Assumption}{Assumptions}
\crefname{remark}{Remark}{Remarks}

\usepackage{todonotes}

\newcommand{\N}{\mathbb{N}}
\newcommand{\R}{\mathbb{R}}
\newcommand{\cP}{\mathcal{P}}
\newcommand{\cC}{\mathcal{C}}

\usepackage[giveninits=true,sorting=none,citestyle=numeric-comp]{biblatex}
\usepackage{tcolorbox} 
\tcbuselibrary{breakable}
\definecolor{highlightgray}{RGB}{240, 240, 240} 
\tcbset{
  highlightbox/.style={
  colback=highlightgray,
  colframe=black!30,
  coltitle=black,
  boxrule=0pt,
  arc=4pt,
  left=0pt,
  right=0pt,
  top=0pt,
  bottom=0pt,
  fonttitle=\bfseries
}}

\newtcolorbox{highlightbox}{
breakable,
  colback=gray!10,
  colframe=gray!25,
  left=1pt,
  right=1pt,
  top=1pt,
  bottom=1pt,
  boxrule=0.pt,
  arc=4pt,
}

\newcommand{\causal}{\mathcal{C}}
\newcommand{\paths}{\mathcal{P}}

%% file: content/abstract.tex
The input space of single-spike neural networks decomposes into causal pieces, and the number of causal pieces has been shown to provide a lower bound on the approximation error for some spiking neural networks.
While this decomposition is applicable to a wide range of spiking neuron models, the limits of how many causal pieces can be realised by spiking neural networks are still unknown.
In this work, we derive two complementary upper bounds on the number of causal pieces in feedforward single-spike neural networks.
First, we establish a combinatorial bound given by the poly-Bernoulli numbers, which applies to a broad class of spiking neuron models.
The bound is of order $e^{2(L-1)N\log N+\mathcal{O}(N)}$, with network depth $L$ and maximum layer width $N$. 
Second, for a specific set of neuron models, we derive a bound based on hyperplane arrangements.
Applying the latter framework, we obtain bounds of order $\mathcal{O}\!\left(N^{(4L-3)(N_0-1)}\right)$ for the non-leaky integrate-and-fire model with exponential synapses and
the ReLU-type spike response model, and $\mathcal{O}\!\left(N^{L(N_0-1)}\right)$ for the leaky integrate-and-fire model with Dirac-$\delta$ synapses, where $N_0$ is the input dimension.
For the non-leaky integrate-and-fire model with Dirac-$\delta$ synapses, additional temporal structure allows us to sharpen the general combinatorial bound to one that depends only on $N_0$, independently of the network depth and widths.

%% file: content/introduction.tex
Recently, the concept of causal pieces (or causal regions) has been introduced to study the expressivity of spiking neural networks (SNNs) using single-spike coding \cite{dold2025causal} -- neural networks in which every neuron spikes at most once per inference \cite{mostafa2017supervised,goltz2021fast,comsa2020temporal}.
A causal piece is a region of the input space on which the same subnetwork is active.
Intuitively, this is the collection of neurons and synaptic connections through which the input spikes influence the output spike times (\cref{fig:intro}A,B).
In particular, the number of causal pieces has been shown to provide a lower bound on the approximation error \cite[Theorem~1]{dold2025causal},
\begin{equation}
    \|\Phi-g\|_{L^\infty(I)}
    >
    c\,\mathfrak{C}(\Phi)^{-2},
\end{equation}
for $g \in C^3$ such that $z\mapsto \exp(g(\tau\log z)/\tau)$ is not affine, where $\tau>0$ denotes the common synaptic time constant of the nLIF network.
$I\subset\mathbb{R}$ is a non-empty compact interval,
$\mathfrak{C}(\Phi)$ denotes the number of causal pieces of the SNN $\Phi$, and $c>0$ is a constant that depends on $g$, $\tau$, and $\max_{x \in I}\bigl\{\exp(g(x)/\tau), \exp(\Phi(x)/\tau)\bigr\}$.
While this result was originally stated for networks of non-leaky integrate-and-fire (nLIF) neurons with exponential synapses, the underlying proof mechanism is more general: it applies whenever, possibly after a suitable change of variables, the network realisation is affine on each causal piece.
This also holds for the ReLU-type response model
(\cref{def:relu_response}) considered here, whose spike-time map is affine on each causal piece by default.
Thus, analogous approximation lower bounds follow for this model.
Furthermore, approximation bounds of this type have also been extended to multi-spike neural networks using a mathematical transference principle \cite{dold2026equivalence}.

Consequently, a low number of causal pieces limits the approximation capabilities of SNNs, and any upper bound on the number of causal pieces $\mathfrak{C}(\Phi)\leq U$ immediately yields
\begin{equation}
    \|\Phi - g \|_{L^\infty(I)}
    >
    c \cdot U^{-2}.
\end{equation}
But how many causal pieces can SNNs have?
In this work, we provide combinatoric upper bounds that are generally applicable to a large set of SNN models.
We further derive upper bounds using hyperplane arrangement arguments that only apply to specific SNN models.
\begin{figure}[t!]
    \centering
    \includegraphics[width=\linewidth]{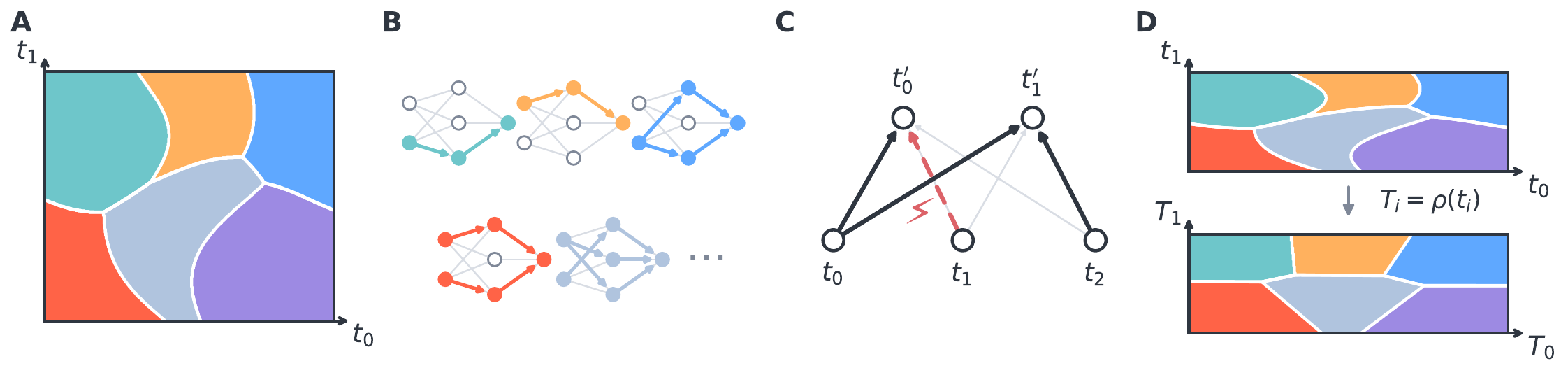}
    \caption{\textbf{A.} The input space (here: two-dimensional) of a SNN partitions into separate regions with, potentially, curved boundaries. Each region (causal pieces) is denoted by a different colour. 
    \textbf{B.} Every causal piece corresponds to a different subnetwork (coloured) that produces, in that input region, the output. Corresponding regions in (A) are coded here by colour. Thus, counting causal pieces corresponds to counting possible subnetworks.
    \textbf{C.} Not all synaptic activity patterns between two layers in a SNN are allowed. We show here one such forbidden pattern: $t'_0$ is caused by input neurons $\{0, 1\} = C_0$, and $t'_1$ by neurons $\{0, 2\}= C_1$. However, if $t'_1 \geq t'_0$, then $C_0$ has to be a subset of $C_1$.
    Similarly, if $t'_1 < t'_0$, $C_1$ has to be a subset of $C_0$.
    Neither is the case, and thus the pattern is not allowed.
    \textbf{D.} For specific neuron models, there exists a change of variables through which causal pieces become polyhedra.}
    \label{fig:intro}
\end{figure}

The first two bounds we introduce are combinatorial and independent of the particular neuron dynamics.
While the first bound is generally applicable to feedforward single-spike neural networks, the second one only applies to SNNs in which the presynaptic spike times are the same for all neurons within a layer.
This includes the standard setting with zero synaptic delays as well as settings where delays are only on the axons, i.e., on the side of the presynaptic neurons.
Most importantly, these upper bounds do not require synaptic weights to be exclusively positive, and they are not restricted to particular neuron models.

The two combinatorial bounds are derived consecutively.
First, given an input, a causal piece is defined by the active subnetwork, i.e., the set of neurons and connections that are involved in determining the output spike times (\cref{fig:intro}B).
Consequently, a naive upper bound on the number of causal pieces is given by the number of possible subsets of all synapses (\cref{prop:trivialBound}):
\begin{highlightbox}
\begin{equation}
\mathfrak{C}_{\diamond}(\Phi)
\leq
\prod_{\ell=1}^{L}
2^{N_{\ell-1} N_\ell} = 2^{\sum_{\ell=1}^{L} N_{\ell-1} N_\ell}\,,
\end{equation}
\end{highlightbox}
where $\mathfrak{C}_\diamond(\Phi)$ is the number of causal pieces of spiking neural network $\Phi$ with $L \in \mathbb{N}$ layers, widths $N_1, ..., N_L \in \mathbb{N}$, and input dimension $N_0 \in \mathbb{N}$.
$\diamond\in\{<,\leq\}$ distinguishes whether neurons in the causal set must spike strictly before ($<$), or may also spike simultaneously with ($\leq$), the output neuron (\cref{def:cpiece}).

However, spiking neurons within a layer have strong temporal constraints: for any fixed input, if postsynaptic neurons $A$ and $B$ have finite spike times and $A$ spikes before $B$, then the causal set of $A$ is a subset of the causal set of $B$ (\cref{fig:intro}C) -- where the causal set contains the indices of the presynaptic neurons whose spikes occur before the postsynaptic spike (see \cref{def:cset}).
Thus, ordering the neurons by their finite spike times produces an increasing sequence of causal sets. 
Neurons that do not spike have empty causal sets.
This nesting phenomenon has also been identified concurrently in \cite{singh2026polyhedral} and used to derive upper bounds for a restricted class of SNNs with shared positive weights.
Here, we show that, for arbitrary synaptic weights, this temporal constraint is equivalent to requiring the matrix of active connections in each layer to be a lonesum matrix.
Using exact counting results for lonesum matrices \cite{brewbaker2008combinatorial}, we obtain an upper bound given by a product of poly-Bernoulli numbers, which is substantially tighter than the first upper bound we derived (\cref{thm:lonesum_bound}):
\begin{highlightbox}
\begin{equation}
    \mathfrak{C}_{\diamond}(\Phi)
\leq
\prod_{\ell=1}^{L}
\mathcal{B}_{N_\ell}^{-N_{\ell-1}} \,,
\end{equation}
\end{highlightbox}
\noindent for $\diamond\in\{<,\leq\}$. 
The poly-Bernoulli numbers are given by \cite{kaneko1997poly,arakawa1999poly,brewbaker2008combinatorial}
\begin{equation}\label{eq:polybernoulli}
\mathcal{B}^{(-k)}_n = \sum_{m=0}^{\min(n,k)} m! \stirlingii{n+1}{m+1} m!\stirlingii{k+1}{m+1}     \,,
\end{equation}
with the Stirling number of the second kind \cite{rennie1969stirling}
\begin{equation}\label{eq:stirlingii}
\stirlingii{n}{k} = \frac{(-1)^k}{k!} \sum_{\ell=0}^k (-1)^\ell \binom{k}{\ell} \ell^n \,,
\end{equation}
which is the number of partitions of $n$ elements into $k$ non-empty sets.

Lastly, we derive upper bounds for a specific class of spiking neuron models.
For these models, a reparameterisation of spike times transforms causal pieces into polyhedral regions on which the SNN realises an affine function (\cref{fig:intro}D).
The exact requirements are stated in \cref{ass:model}.
In addition, we assume no synaptic delays.
The covered models include the nLIF neuron model with exponential synapses studied in \cite{dold2025causal}, the ReLU-type spike response model used in \cite{singh2026polyhedral}, the nLIF model with Dirac-$\delta$ synapses \cite{kheradpisheh2020temporal}, and the leaky integrate-and-fire (LIF) model with Dirac-$\delta$ synapses -- again with no constraints on the sign of synaptic weights.
For such models, the number of causal pieces can be upper bounded by estimating the number of regions formed by the corresponding hyperplane arrangements, similarly to region-counting arguments for ReLU-based artificial neural networks (ANNs) \cite{montufar2014number,montufar2017notes}.
Our bound additionally tracks the dimension of the image of each intermediate layer, which can yield tighter bounds than our combinatoric estimates for networks containing low-dimensional bottleneck layers (\cref{thm:hyper_general}).
Specifically, for the nLIF model with exponential synapses and the ReLU-type response model, we find (\cref{thm:nLIF_hyperplane,thm:ReLU_hyperplane}):
\begin{highlightbox}
\begin{equation}
\textbf{nLIF+Exp. / ReLU-type:}\ \ \mathfrak{C_{<}}(\Phi) \leq \prod_{\ell = 1}^L \sum_{j=0}^{d_{\ell-1}} 2^j{{N_{\ell-1}\choose 2}\choose j}\min\left\{N_{\ell-1}^{N_\ell},\ \sum_{j=0}^{d_{\ell-1}}2^j{N_\ell (N_{\ell-1}-1)\choose j}\right\},
\end{equation}
\end{highlightbox}
with $d_\ell:=\min_{0\leq r\leq\ell} (N_r-1)$, $\ell\in\{0,\dots,L\}$. 
For the nLIF/LIF model with Dirac-$\delta$ (or instantaneous) synapses, we have (\cref{thm:LIF,thm:nLIF_Dirac}):
\begin{highlightbox}
\begin{equation}
        \textbf{LIF:}\ \ \mathfrak{C}_{\diamond}(\Phi) \leq R(N_0) \cdot \prod_{\ell=1}^L \min\left\{(d_{\ell-1}+1)^{N_\ell},\ \sum_{j=0}^{d_{\ell-1}}2^j{N_\ell d_{\ell-1}\choose j}\right\}, \ \textbf{nLIF:}\ \ \mathfrak{C}_{\diamond}(\Phi) \leq R(N_0) \,,
\end{equation}
\end{highlightbox}
for $\diamond \in \{<, \leq \}$ (see \cref{def:cpiece}) and with $R(N_0) = \sum_{q\in[N_0]}q!{N_0\brace q}$.
The two combinatoric bounds and the hyperplane bounds are compared for different network architectures in \cref{fig:bounds}.
For completeness, we also show the general bound derived in \cite[Theorem 7.1]{singh2026polyhedral} for these architectures.
\begin{figure}[hb!]
    \centering
    \includegraphics[width=\linewidth]{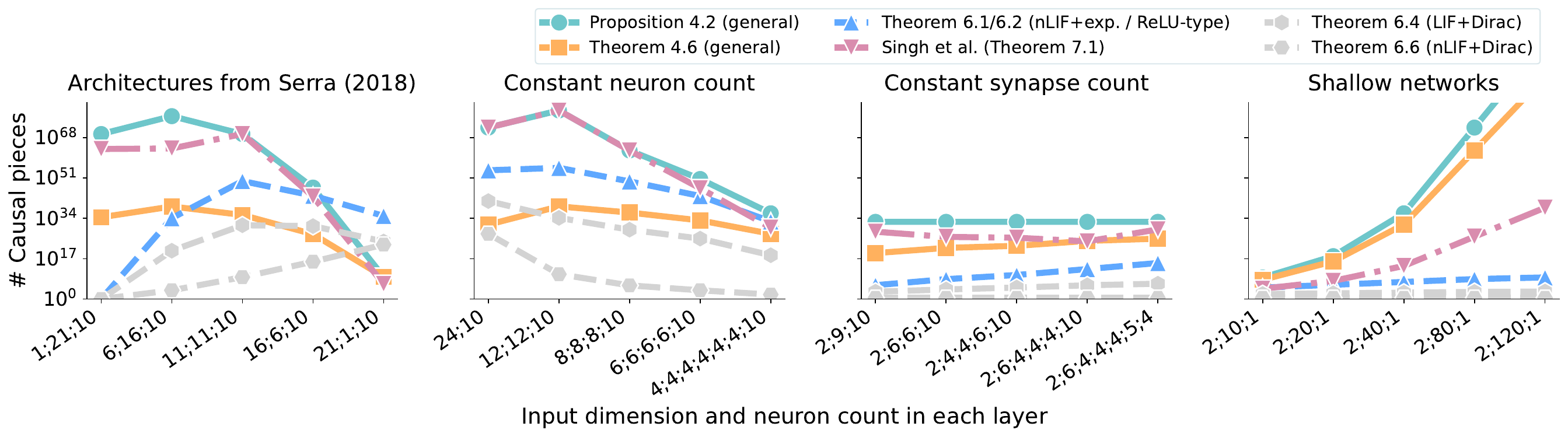}
    \caption{Upper bounds as stated in \cref{prop:trivialBound} and \cref{thm:lonesum_bound} for general SNNs, and for specific SNNs using hyperplane arrangement arguments (\cref{thm:nLIF_hyperplane,thm:ReLU_hyperplane,thm:LIF,thm:nLIF_Dirac}), evaluated for different neural architectures. We denote here ``input dimension; hidden dimension; hidden dimension; $\dots$; output dimension''. 
    As some of our bounds also apply to the neuron model studied in \cite{singh2026polyhedral}, we include here the upper bound stated in \cite[Theorem 7.1]{singh2026polyhedral}.
    The first set of architectures is taken from \cite{serra2018bounding}.}
    \label{fig:bounds}
\end{figure}

The hyperplane approach yields bounds of the form $N^{\mathcal{O}(N_0 L)}$ for
the models considered here, where $N$ is the maximum width and $N_0$ the input dimension. This is due to  
\begin{align*}
  \max_{0\leq\ell \leq L}d_{\ell}=d_0=(N_0-1). 
\end{align*}
While $N_0$ enters the exponent, the dependence on the maximum width remains polynomial. This is of particular interest as the input dimension $N_0$ is generally given by the task at hand rather than chosen as part of the network architecture. 
\cref{tab:width-asymptotics} provides the asymptotic scaling of the bounds with respect to maximum width $N$, given fixed depth and input dimension.    
The bound for nLIF networks with Dirac-$\delta$ synapses depends only on the input dimension, independently of the depth and the widths of all subsequent layers.
\begin{table}[t!]
    \centering
    \begin{tabular}{@{}p{0.58\linewidth}p{0.36\linewidth}@{}}
        \toprule
        Model
        & Asymptotic upper bound for causal pieces\\
        \midrule
        
        nLIF with exponential synapses (\Cref{thm:nLIF_hyperplane})
        & $\mathcal{O}\!\left(N^{(4L-3)(N_0-1)}\right)$ \\[0.6em]
        
        ReLU response (\Cref{thm:ReLU_hyperplane})
        & $\mathcal{O}\!\left(N^{(4L-3)(N_0-1)}\right)$ \\[0.6em]

        LIF with Dirac-$\delta$ synapses (\Cref{thm:LIF})
        & $\mathcal{O}\!\left(N^{L(N_0-1)}\right)$ \\[0.6em]

        nLIF with Dirac-$\delta$ synapses (\Cref{thm:nLIF_Dirac})
        & $\mathcal{O}(1)$ \\[0.6em]

        Combinatorial bound (\Cref{thm:lonesum_bound})
        & $e^{2(L-1)N\log N+\mathcal{O}(N)}$ \\
        \bottomrule
    \end{tabular}\\[5pt]
    \caption{Asymptotic behaviour of the upper bounds w.r.t.\@ maximum width $N$ for fixed input dimension $N_0$ and depth $L$.
    }
    \label{tab:width-asymptotics}
\end{table}
For the ReLU-response model, \cite[Proposition~6.3]{singh2026polyhedral} show that, for fixed input dimension $N_0$, positive weights, and per-neuron threshold, there exist shallow networks of width $N$ with $\mathfrak C(\Phi) = \Omega\!\left(N^{N_0-1}\right)$.
For $L=1$, this matches our upper bound $\mathcal O(N^{N_0-1})$ in its dependence on $N$.

The exponential dependence on the bottleneck dimension and depth is unavoidable already for ReLU-response networks. 
Singh et al.~\cite[Lemma~7.1]{singh2026polyhedral}, with $m=3$, constructs a two-layer network of widths $(2,3,2)$ with two distinct causal patterns whose input domains map bijectively onto the same output domain. Iterating this construction gives $2^K$ patterns after $K$ blocks, as established in \cite[Theorem~7.2]{singh2026polyhedral}.\footnote{The construction in \cite{singh2026polyhedral} establishes a rigorous theoretical lower bound, but the contraction of relative spike-time intervals after each two-layer block may limit its practical realisation in the presence of finite temporal resolution or analogue noise.} 
Following the parallelisation discussed immediately thereafter, and allowing zero synaptic weights, one can run $r$ copies on disjoint input pairs by setting all cross-pair weights to zero.
Their neuron-dependent thresholds can be replaced by a common threshold without changing any spike times, by rescaling the incoming weights of each neuron accordingly.

For $r,K\in\mathbb N$, this yields a network $\Phi$ of architecture $(2r,3r,2r,\ldots,3r,2r)$, with $K$ blocks. The distinct patterns remain distinguishable under our causal subnetwork definition, as, within each two-input block, the second-layer neurons include all first-layer neurons as causal predecessors, and one of those first-layer neurons includes both block inputs. Consequently, with depth $L=2K$ and bottleneck dimension $d=2r-1$,
\begin{align}
    C_<(\Phi)\ge 2^{rK}=2^{(d+1)L/4}.
\end{align}
Thus, even with all widths proportional to $d$, a general upper bound must allow exponential growth in $dL$. 
For each fixed architecture, the lower bound also extends to nLIF neural networks with exponential synapses in a parameter regime where their response sufficiently approximates the ReLU-type response (see \Cref{app:nlif-lower-bound}). 
This argument does not establish the lower bound for nLIF neural networks outside that regime.
Moreover, we only show that exponential dependence on $dL$ is unavoidable, but do not establish the optimality of the exponents in our upper bounds. Closing the remaining gap between the lower and upper bounds remains future work.
In the case of Dirac-$\delta$ synapses, things are a bit simpler and we get an exactly matching lower bound for nLIF and a matching lower bound, up to a constant, for LIF (see \Cref{sec:Dirac_lower}).

%% file: content/related.tex
Worst-case upper bounds on the number of causal pieces have previously been derived for individual spiking neurons and for deep SNNs \cite{singh2023expressivity,dold2025causal}.
Compared to these initial results, the upper bounds we derive in this work are much tighter.

Concurrently to this work, Singh et al.~\cite{singh2026polyhedral} derived upper and lower bounds on the number of causal pieces (or causal regions) of positive-weight time-to-first-spike networks with a ReLU-type postsynaptic response (see also \cite{singh2023expressivity,neuman2024stable} for details on the used neuron model).
They observe that the causal sets of neurons form a nested sequence and derive an exact count for the maximum number of realisable causal set tuples for shallow neural networks where all neurons share the same positive weight vector (see \cite[Example~6.2]{singh2026polyhedral} and \cite[Proposition~6.4]{singh2026polyhedral}).
In contrast, our combinatorial result does not require shared weights or positive synaptic weights.
For any fixed input, ordering the postsynaptic neurons by their finite spike times produces a nested sequence of causal sets.
We show that, when viewed in the original neuron labelling, this condition is precisely the lonesum matrix condition, allowing us to use exact enumeration results from the combinatorics literature.

In addition, Singh et al.~\cite{singh2026polyhedral} show that causal pieces are polyhedral regions for their positive-weight ReLU-response time-to-first-spike model and derive upper bounds using hyperplane arrangements.
We use a related approach for a larger class of spiking neuron models, and count the relevant hyperplanes only after conditioning on the presynaptic spike order.
Although \cite[Remark~7.1]{singh2026polyhedral} notes that intermediate affine images may be lower-dimensional, their general deep-network bound (Theorem 7.1) uses the full presynaptic dimension. Our bound explicitly propagates the reachable dimension from layer to layer, which can substantially reduce the region count in networks containing bottleneck layers.

Piece counting has been used extensively to quantify the expressive power of ANNs with ReLU activation function, or generally piecewise-affine activation functions.
In these cases, the approximation error is lower bounded by an expression that also depends on the inverse square of the number of affine regions the ANN expresses \cite{frenzen2010number}.
How many affine regions (or pieces) an ANN can have, has been upper bounded using hyperplane arguments \cite{montufar2014number,montufar2017notes,raghu2017expressive,serra2018bounding,hinz2019framework,goujon2024number}, with ReLU ANNs having bounds of order $\mathcal{O}(N^{LN_0})$ for fixed input dimension $N_0$ and depth $L$.
Our hyperplane-based bounds for SNNs take into account bottleneck dimensions of intermediate layers (see $d_\ell$ in \cref{thm:hyper_general}).
This has also been done for refining bounds of the number of affine pieces, derived using hyperplane arrangement arguments, in \cite[Proposition 3]{montufar2017notes}, and further used in \cite[Theorem 1, see also Lemma 5]{serra2018bounding}.
Moreover, Hanin \& Rolnick~\cite{hanin2019complexity,hanin2019deep} highlighted that the number of affine pieces observed in practice is much lower than expected from these bounds, showing the importance of also bounding or estimating the average number of affine pieces in ANNs (see, e.g., \cite{goujon2024number}).

Alternative approaches to quantifying the expressivity of SNNs have also been proposed recently.
Hundrieser et al.~\cite{hundrieser2025universal} study feedforward SNNs as maps from spike trains to spike trains and establish a universal representation theorem.
They further show near-optimality of the required network size by deriving nearly matching lower bounds.
As part of this analysis, they quantify expressivity by the number of distinct functions that SNNs with a fixed architecture can realise on a finite set of input spike trains.

Finally, Neuman et al.~\cite{neuman2024stable} study SNNs composed of ReLU-type response neurons with positive synaptic weights and equipped with an affine encoder and decoder.
For this architecture, they derive generalisation bounds using covering numbers and establish rate-optimal approximation of smooth functions as well as dimension-independent approximation of Barron regular functions.

%% file: content/methods.tex
For $q\in\mathbb{N}$, we use the notation $[q]:=\{1,\dots,q\}$, $[0] := \emptyset$, and for $J\subseteq\N$, we write $D^J:=\{x=(x_j)_{j\in J}:x_j\in D\}$. Given a single-spike network with depth $L\in\N$ and layer widths $N_0,\dots,N_L\in\N$, we denote by $t^{(\ell)}_i(b)$ the spike time of the $i$-th neuron in the $\ell$-th layer for input $b\in\R^{N_0}$. We set $t^{(0)}(b):=b$.

\textit{Causal sets} are defined for individual neurons.
Given an input, the causal set contains the indices of all presynaptic neurons that caused the neuron to fire (\cref{def:cset}).
\begin{highlightbox}
\begin{definition}[Causal set]\label{def:cset}
    Let $L\in\N$, $\ell\in [L]$, $i\in [N_\ell]$, and $b\in\R^{N_0}$.
    The \textbf{exclusive causal set} of the $i$-th neuron in the $\ell$-th layer is given by 
    \begin{align*}
        \causal^{(\ell)}_{i,<}(b):=\begin{cases}
            \{j\in[N_{\ell-1}]: t^{(\ell-1)}_j(b) < t^{(\ell)}_i(b) \}, & t^{(\ell)}_i(b) <\infty\\
            \emptyset, &t^{(\ell)}_i(b)=\infty
        \end{cases}.
    \end{align*}
    The \textbf{inclusive causal set} of the $i$-th neuron in the $\ell$-th layer is given by 
    \begin{align*}
        \causal^{(\ell)}_{i,\leq}(b):=\begin{cases}
            \{j\in[N_{\ell-1}]: t^{(\ell-1)}_j(b) \leq t^{(\ell)}_i(b) \}, & t^{(\ell)}_i(b) <\infty\\
            \emptyset, &t^{(\ell)}_i(b)=\infty
        \end{cases}.
    \end{align*}
\end{definition}
\end{highlightbox}
We use two slightly different notions of causal sets, since we consider different neuron models. 
Conceptually, the exclusive version makes sense for, e.g.\@ nLIF with exponential synapses or ReLU-type response, as in these cases the contribution of an incoming spike to the potential starts at $0$. 
Thus, an incoming spike at exactly the time of the outgoing spike does not have a causal impact, and including it would allow for degenerate cases where (transformed) affinity does not hold on the causal pieces.   
In contrast, for neurons with Dirac-$\delta$ synapses, an incoming spike contributes immediately.
Thus, an inclusive definition is more appropriate in this case. 

The notion of a causal subnetwork extends causal sets to deep networks (\cref{def:cpath}).
\begin{highlightbox}
\begin{definition}[Causal subnetwork] \label{def:cpath}
Let $L \in \mathbb{N}$, $\ell \in [L]$, $N_0,\dots,N_\ell \in \mathbb{N}$, $I \subseteq [N_\ell]$, $b\in\R^{N_0}$, and $\diamond\in\{<,\leq\}$.
We define
\begin{align*}
    \mathcal{P}^{(\ell)}_{I,\ell,\diamond}(b):=\{\big(i,\mathcal{C}^{(\ell)}_{i,\diamond}(b)\big)\colon i\in I\},
\end{align*}
and recursively, for $n=\ell,\dots,2$,
\begin{align*}
    \mathcal{P}^{(\ell)}_{I,n-1,\diamond}(b):=\bigcup_{(i,\mathcal{C})\in\mathcal{P}^{(\ell)}_{I,n,\diamond}(b)}\{\big(j,\mathcal{C}^{(n-1)}_{j,\diamond}(b)\big)\colon j\in\mathcal{C} \}.
\end{align*}
The \textbf{causal subnetwork} for layer $\ell$, index set $I$ and input $b$ is
\begin{align*}
    \mathcal{P}^{(\ell)}_{I,\diamond}(b):=\{\big(n,\mathcal{P}^{(\ell)}_{I,n,\diamond}(b)\big)\colon n\in[\ell]\}.
\end{align*}
\end{definition} 
\end{highlightbox}
\vspace{-4pt}
The causal subnetwork is the set of active synapses that were involved in calculating the output of a deep SNN given an input.
Specifically, each pair $(i,\mathcal{C})\in\mathcal{P}^{(\ell)}_{I,n,\diamond}(b)$ records the index of a neuron in layer $n$ together with its causal set. Starting from the neurons indexed by $I$, the recursion includes precisely those neurons of the preceding layer whose indices occur in these causal sets. 
The pairs $(n,\mathcal P_{I,n,\diamond}^{(\ell)}(b))$ retain the layer to which each collection belongs. 

Following the definition of causal sets and subnetworks, we can define a \textit{causal piece} (\cref{def:cpiece}).
\begin{highlightbox}
\begin{definition}[Causal piece / region]\label{def:cpiece} Let $L \in \mathbb{N}$, $\ell \in [L]$, $N_0,\dots,N_\ell \in \mathbb{N}$, $I \subseteq [N_\ell]$, $a\in\R^{N_0}$, and $\diamond\in\{<,\leq\}$. Then the \textbf{causal piece} associated with the causal subnetwork $\paths^{(\ell)}_{I,\diamond}(a)$ is
\begin{align*}
   \mathbb{P}[\paths^{(\ell)}_{I,\diamond}(a)]:=\{b\in\R^{N_0}\colon \paths^{(\ell)}_{I,\diamond}(b)=\paths^{(\ell)}_{I,\diamond}(a)\}. 
\end{align*}

\label{def:pieces}
\end{definition}
\end{highlightbox}
Intuitively, a causal piece is a region in the input space for which the output of the SNN is always produced by the same causal subnetwork.
While originally in \cite{dold2025causal}, causal pieces are defined for the input and parameter space, we here restrict it to inputs only, assuming SNNs with constant weights.
In this work, we are primarily interested in the number of causal pieces (\cref{def:numPieces}).
\begin{highlightbox}
\begin{definition}[Number of causal pieces]\label{def:numPieces}
Let $\Phi$ be a feedforward single-spike network with depth
$L\in\mathbb N$ and layer widths $N_0,\ldots,N_L\in\mathbb N$. For $\diamond\in\{<,\leq\}$, the number of causal pieces of $\Phi$ is
\begin{align}
    \mathfrak{C}_{\diamond}(\Phi):=
    \left|
    \left\{
        \mathcal{P}^{(L)}_{[N_L],\diamond}(b)
        : b\in\mathbb R^{N_0}
    \right\}
    \right|.
\end{align}
\end{definition}
\end{highlightbox}

We study four neuron models in particular here: the ReLU-type response model \cref{def:relu_response}, nLIF with exponential or Dirac-$\delta$ synapses (\cref{def:nLIF,def:dirac}), and LIF with Dirac-$\delta$ synapses (\cref{def:instant}).
Throughout the paper we use the convention $\inf\emptyset:=\infty$. In particular, this means that for neurons whose potential never reaches the threshold we consider the value of their spike-time to be $\infty$.
\begin{highlightbox}
\begin{definition}[ReLU-type response model]\label{def:relu_response}
    Let $N_0 \in \mathbb{N}$, $\vartheta  > 0$ be the spike threshold, $w \in \mathbb{R}^{N_0}$ the synaptic weights, and $s \in (\mathbb{R}\cup\{\infty\})^{N_0}$ the input spike times. 
    Then the membrane potential $u(t) \in \mathbb{R}$ of a \textbf{ReLU-type response neuron} at time $t \in \mathbb{R}$ is given by:
\begin{equation}    
    u(t) = \sum_{i\in[N_0], s_i<\infty} w_i \max(0, t-s_i) \,.
\end{equation}
Its spike time is $t^* = \mathrm{inf}\{t: u(t) = \vartheta\}$.
If $t^*<\infty$, its exclusive causal set $\mathcal{C}$ satisfies $\sum_{i\in\mathcal{C}}w_i>0$ and
\begin{equation}
    t^*
    =
    \frac{\vartheta + \sum_{i\in\mathcal{C}} w_i s_i}
         {\sum_{i\in\mathcal{C}} w_i} \,.
\end{equation}
 
\end{definition}
\end{highlightbox}
\begin{highlightbox}
\begin{definition}[nLIF with exponential synapses]\label{def:nLIF}
    Let $N_0 \in \mathbb{N}$, $\tau \in \mathbb{R}^+$ be the time constant, $\vartheta > 0$ the spike threshold, $w \in \mathbb{R}^{N_0}$ the synaptic weights, and $s \in (\mathbb{R}\cup\{\infty\})^{N_0}$ the input spike times. 
    Then the membrane potential $u(t) \in \mathbb{R}$ of a \textbf{nLIF neuron with exponential synapses} at time $t \in \mathbb{R}$ is given by:
\begin{equation}
    u(t)
    :=
    \sum_{i\in[N_0], s_i<t} w_i
    \bigl(1-e^{-(t-s_i)/\tau}\bigr).
\end{equation}
Its spike time is $t^* = \mathrm{inf}\{t: u(t) = \vartheta\}$.
If $t^*<\infty$, its exclusive causal set $\mathcal{C}$ satisfies $\sum_{i\in\mathcal{C}}w_i>\vartheta$ and
\begin{equation}
    t^*
    =
    \tau
    \log
    \left(
        \frac{\sum_{i\in C} w_i e^{s_i/\tau}}
             {\sum_{i\in C} w_i - \vartheta}
    \right) \,.
\end{equation}
\end{definition}
\end{highlightbox}
\begin{highlightbox}
\begin{definition}[nLIF with Dirac-$\delta$ synapses]\label{def:instant}
    Let $N_0 \in \mathbb{N}$, $\vartheta > 0$ be the spike threshold, $w \in \mathbb{R}^{N_0}$ the synaptic weights, and $s \in (\mathbb{R}\cup\{\infty\})^{N_0}$ the input spike times. 
    Then the membrane potential $u(t) \in \mathbb{R}$ of a \textbf{nLIF neuron with Dirac-$\pmb{\delta}$ synapses} at time $t \in \mathbb{R}$ is given by:
\begin{equation}
    u(t)
    =
    \sum_{i\in[N_0], s_i\leq t} w_i \,.
\end{equation}
Its spike time is $t^* = \mathrm{inf}\{t: u(t)  \geq \vartheta\}$.
Since the membrane potential can cross the positive threshold from below only upon the arrival of a presynaptic spike, it follows, when $t^*<\infty$, that the neuron's inclusive causal set $\mathcal{C}$ is nonempty and
\begin{equation}
    t^*
    =
    \max_{i\in \mathcal{C}} s_i .
\end{equation}
\end{definition}
\end{highlightbox}
\begin{highlightbox}
\begin{definition}[LIF with Dirac-$\delta$ synapses]\label{def:dirac}
    Let $N_0 \in \mathbb{N}$, $\tau \in \mathbb{R}^+$ be the membrane time constant, $\vartheta > 0$ the spike threshold, $w \in \mathbb{R}^{N_0}$ the synaptic weights, and $s \in (\mathbb{R}\cup\{\infty\})^{N_0}$ the input spike times. 
    Then the membrane potential $u(t) \in \mathbb{R}$ of a \textbf{LIF neuron with Dirac-$\pmb{\delta}$ synapses} at time $t \in \mathbb{R}$ is given by:
\begin{equation}
    u(t)
    =
    \sum_{i\in[N_0], s_i\leq t} w_i
    e^{-(t-s_i)/\tau} .
\end{equation}
Its spike time is $t^* = \mathrm{inf}\{t: u(t)  \geq \vartheta\}$.
Since the membrane potential can cross the positive threshold from below only upon the arrival of a presynaptic spike, it follows, when $t^*<\infty$, that the neuron's inclusive causal set $\mathcal{C}$ is nonempty and
\begin{equation}
    t^*
    =
    \max_{i\in \mathcal{C}} s_i .
\end{equation}
\end{definition}
\end{highlightbox}
The ReLU-type response model has recently been used to derive theoretical results \cite{neuman2024stable,singh2023expressivity,singh2026polyhedral} and to map ReLU neural networks to time-to-first-spike neural networks \cite{stanojevic2023exact,stanojevic2024high}.
Similarly, the nLIF models have been used to train SNNs with exact error backpropagation \cite{mostafa2017supervised,goltz2021fast,kheradpisheh2020temporal}.

%% file: content/proof_general.tex
Before deriving bounds, we first introduce a useful alternative representation of causal subnetworks (\cref{def:binarymatrices}).
\begin{highlightbox}
\begin{definition}\label{def:binarymatrices}
Assume a feedforward SNN $\Phi$ with depth $L$, input dimension $N_0$, and layer widths $N_1,\dots,N_L$.
For an input $b\in\R^{N_0}$ and $\diamond\in\{<,\leq\}$, we represent the causal subnetwork of $\Phi$ by binary matrices
\begin{align*}
    B_\diamond^{(\ell)}(b)
    \in\{0,1\}^{N_\ell\times N_{\ell-1}},
    \qquad \ell\in[L],
\end{align*}
where
\begin{align*}
    \bigl(B_\diamond^{(\ell)}(b)\bigr)_{ij}
    :=
    \begin{cases}
        1, & \text{if neuron $i$ in layer $\ell$ belongs to the causal subnetwork and }
             j\in\mathcal C_{i,\diamond}^{(\ell)}(b),\\
        0, & \text{otherwise}.
    \end{cases}
\end{align*}
Hence, the tuple $\bigl(B_\diamond^{(1)}(b),\dots,B_\diamond^{(L)}(b)\bigr)$ represents the causal subnetwork of $\Phi(b)$.
\end{definition}
\end{highlightbox}
From \cref{def:binarymatrices}, since every causal piece corresponds to a unique causal subnetwork, it immediately follows that the number of causal pieces can be bounded by the number of realisable tuples $\bigl(B_\diamond^{(1)},\dots,B_\diamond^{(L)}\bigr)$ (\cref{prop:trivialBound}).
\begin{highlightbox}
\begin{proposition}[Worst-case upper bound]
\label{prop:trivialBound}
Let $\diamond\in\{<,\leq\}$ and let $\Phi$ be a feedforward single-spike neural network with $L$ layers, input dimension $N_0$, widths $N_1,N_2,\ldots,N_L$, and constant weights $W^{(\ell)} \in \mathbb{R}^{N_\ell \times N_{\ell-1}}$ as well as constant neuron parameters. Let $\mathfrak{C}(\Phi)$ be the number of causal pieces of $\Phi$ over $\mathbb{R}^{N_0}$. Then
\begin{equation}
\mathfrak{C}_{\diamond}(\Phi)
\leq
\prod_{\ell=1}^{L}
2^{N_{\ell-1} N_\ell} = 2^{\sum_{\ell=1}^{L} N_{\ell-1} N_\ell}\,,
\label{eq:upper_bound}
\end{equation}
and for $N = \max\left\{N_0, ..., N_L\right\}\,,$ 
\begin{equation}
\mathfrak{C}_{\diamond}(\Phi) \leq 2^{LN^2} \,.
\label{eq:upper_bound_width}
\end{equation}
\end{proposition}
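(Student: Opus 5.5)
The plan is to show that the map sending a causal subnetwork to its binary-matrix representation from \cref{def:binarymatrices} is injective. Then $\mathfrak{C}_\diamond(\Phi)$ is at most the number of tuples of binary matrices, and the bound follows by counting.

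\textbf{Step 1 (injectivity).} Fix $b,b'\in\R^{N_0}$ with $B^{(\ell)}_\diamond(b)=B^{(\ell)}_\diamond(b')$ for all $\ell\in[L]$. We argue by downward induction on $n=L,\dots,1$ that $\mathcal{P}^{(L)}_{[N_L],n,\diamond}(b)=\mathcal{P}^{(L)}_{[N_L],n,\diamond}(b')$.

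For $n=L$, every output neuron belongs to the causal subnetwork. Hence row $i$ of $B^{(L)}_\diamond(b)$ is the indicator vector of $\mathcal{C}^{(L)}_{i,\diamond}(b)$, so the pairs $(i,\mathcal{C}^{(L)}_{i,\diamond}(b))$ are determined by $B^{(L)}$.

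For the step from $n$ to $n-1$, the set of indices of layer-$(n-1)$ neurons in the subnetwork is the union of the causal sets recorded in $\mathcal{P}^{(L)}_{[N_L],n,\diamond}$. By the induction hypothesis this index set is the same for $b$ and $b'$. For each such index $j$, row $j$ of $B^{(n-1)}_\diamond$ is the indicator of $\mathcal{C}^{(n-1)}_{j,\diamond}$. Therefore $\mathcal{P}^{(L)}_{[N_L],n-1,\diamond}$ is determined as well.

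Consequently $\mathcal{P}^{(L)}_{[N_L],\diamond}(b)=\mathcal{P}^{(L)}_{[N_L],\diamond}(b')$. This gives
\begin{equation*}
\mathfrak{C}_\diamond(\Phi)\le\bigl|\{(B^{(1)}_\diamond(b),\dots,B^{(L)}_\diamond(b)):b\in\R^{N_0}\}\bigr|.
\end{equation*}

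\textbf{Step 2 (counting).} Each $B^{(\ell)}_\diamond(b)$ lies in $\{0,1\}^{N_\ell\times N_{\ell-1}}$, a set of $2^{N_\ell N_{\ell-1}}$ elements. The tuples therefore number at most $\prod_{\ell=1}^L 2^{N_{\ell-1}N_\ell}$, which is \eqref{eq:upper_bound}. Using $N_{\ell-1}N_\ell\le N^2$ for each $\ell$ gives $\sum_{\ell}N_{\ell-1}N_\ell\le LN^2$, which is \eqref{eq:upper_bound_width}.

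The only point needing care is the injectivity in Step 1. The binary matrices forget which neurons are ``in the subnetwork'' versus merely having an empty causal set: a zero row could mean either that the neuron is not in the subnetwork or that it belongs with $\mathcal{C}=\emptyset$. The downward induction resolves this, since membership of layer-$(n-1)$ neurons is recovered from the already-determined layer-$n$ data rather than from the rows themselves. Nothing about the neuron model, the constant weights, or the sign of the weights is used, so the argument applies to any feedforward single-spike network.
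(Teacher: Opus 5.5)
Your proposal is correct and takes the same route as the paper: represent each causal subnetwork by the tuple $\bigl(B^{(1)}_\diamond(b),\dots,B^{(L)}_\diamond(b)\bigr)$ from \cref{def:binarymatrices}, then count all $2^{\sum_{\ell}N_{\ell-1}N_\ell}$ possible tuples. The paper only asserts that this tuple represents the subnetwork. Your downward induction justifies that step, since it recovers layer-$(n-1)$ membership from the already-determined layer-$n$ causal sets and so resolves the zero-row ambiguity; it fills in the paper's argument rather than replacing it.
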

\end{highlightbox}

In Proposition~\ref{prop:trivialBound}, we assumed that $B^{(\ell)}$ can be an arbitrary binary matrix.
However, this is not the case due to the causal nature of spikes, which imposes a certain structure on the binary matrices.
As Lemma~\ref{lemma:lonesum} shows, the $B^{(\ell)}$ are so-called lonesum matrices, i.e., matrices that can be uniquely reconstructed from their row-and column-sum-vectors \cite{ryser1957combinatorial}.
\begin{highlightbox}
\begin{lemma}[Lonesum]\label{lemma:lonesum}
    The binary matrices $\left(B^{(1)}, ..., B^{(L)}\right)$ (see \cref{def:binarymatrices}), representing a causal piece in a $L$-layer SNN where all neurons in a layer receive the same input spikes, are lonesum matrices.
\end{lemma}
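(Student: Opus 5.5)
The plan is to use the standard characterisation of lonesum matrices: a binary matrix is lonesum if and only if it contains no $2\times 2$ submatrix equal to $\begin{pmatrix}1&0\\0&1\end{pmatrix}$ or $\begin{pmatrix}0&1\\1&0\end{pmatrix}$. This is Ryser's result on interchanges. An equivalent form is that the row supports, viewed as subsets of the column index set, form a chain under inclusion. We therefore fix a layer $\ell\in[L]$ and an input $b$, and we show that the rows of $B^{(\ell)}_\diamond(b)$, viewed as subsets of $[N_{\ell-1}]$, are totally ordered by inclusion. That is the lonesum property for each $B^{(\ell)}$.

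Row $i$ of $B^{(\ell)}_\diamond(b)$ is either the zero row, if neuron $i$ is not in the causal subnetwork, or exactly the causal set $\mathcal{C}^{(\ell)}_{i,\diamond}(b)$. The zero row is the empty set, which is contained in every set. Neurons with infinite spike time already have empty causal set by \cref{def:cset}. So it remains to compare two rows $i,k$ that are both in the causal subnetwork with finite spike times.

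Here we use the hypothesis that all neurons in layer $\ell$ receive the same presynaptic spike times $t^{(\ell-1)}_j(b)$, $j\in[N_{\ell-1}]$. Then $\mathcal{C}^{(\ell)}_{i,\diamond}(b)=\{j: t^{(\ell-1)}_j(b)\diamond t^{(\ell)}_i(b)\}$ is a sublevel set of one common function $j\mapsto t^{(\ell-1)}_j(b)$, taken at threshold $t^{(\ell)}_i(b)$. Without loss of generality $t^{(\ell)}_i(b)\le t^{(\ell)}_k(b)$. Then $t^{(\ell-1)}_j(b)\diamond t^{(\ell)}_i(b)$ implies $t^{(\ell-1)}_j(b)\diamond t^{(\ell)}_k(b)$, for both $\diamond\in\{<,\leq\}$. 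Hence $\mathcal{C}^{(\ell)}_{i,\diamond}(b)\subseteq\mathcal{C}^{(\ell)}_{k,\diamond}(b)$, and the rows form a chain.

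Finally we check that a chain of row supports excludes both forbidden $2\times2$ patterns. Suppose rows $i,k$ and columns $j,j'$ carried such a pattern, say $B_{ij}=B_{kj'}=1$ and $B_{ij'}=B_{kj}=0$. Then neither support contains the other, contradicting the chain property. We then invoke the characterisation, or directly observe that a matrix whose row supports are nested, with sizes given by the row sums, is determined by its row and column sums: sorting by row sums recovers the chain, and the column sums identify which columns enter at each step.

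The argument is essentially straightforward. The only point needing care is citing or justifying the equivalence between ``uniquely reconstructible from its margins'' and ``no $2\times2$ interchange submatrix''. I would cite Ryser for this rather than reprove it, adding only the short direct argument in the nested case if needed.
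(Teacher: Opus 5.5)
Your proposal is correct and follows essentially the same route as the paper. Both rely on Ryser's forbidden-$2\times 2$-submatrix characterisation and on the fact that, with common presynaptic spike times, a neuron spiking no later than another has a contained causal set, with non-spiking or non-included neurons giving zero rows; the only difference is cosmetic, since you first show the rows form a chain and then exclude the patterns, whereas the paper argues by direct contradiction on a putative forbidden submatrix and treats strict and simultaneous spike orderings as separate cases.
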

\end{highlightbox}
\begin{proof}
    A matrix is lonesum if and only if it does not contain either of the two submatrices \cite{ryser1957combinatorial,brewbaker2008combinatorial}
    \begin{equation}\label{eq:submatrices}
        \begin{pmatrix}
1 \ \ 0\\ 0\ \ 1
\end{pmatrix}\quad 
\text{and}\quad\begin{pmatrix}
0 \ \ 1\\  1 \ \ 0
\end{pmatrix} \,.
    \end{equation}
We show that neither pattern can occur in the activity matrices $B^{(\ell)}$.
Assume first that $B^{(\ell)}$ contains the first pattern. Then there exist two postsynaptic neurons $i_1,i_2$ in layer $\ell$ and two
presynaptic neurons $j_1,j_2$ in layer $\ell-1$ such that
\begin{equation}
    B^{(\ell)}_{i_1j_1}=1,\qquad
    B^{(\ell)}_{i_1j_2}=0,\qquad
    B^{(\ell)}_{i_2j_1}=0,\qquad
    B^{(\ell)}_{i_2j_2}=1.
\end{equation}
where we omit the $\diamond$ index for readability. Suppose that neuron $i_1$ spikes before neuron $i_2$, i.e.,
\begin{equation}
    t^{(\ell)}_{i_1}<t^{(\ell)}_{i_2}.
\end{equation}
Since $B^{(\ell)}_{i_1j_1}=1$, the spike of neuron $j_1$ belongs to the causal set of neuron $i_1$, and hence
\begin{equation}
    t^{(\ell-1)}_{j_1}
    \diamond t^{(\ell)}_{i_1}
    < t^{(\ell)}_{i_2},
\end{equation}
where $\diamond\in\{<,\leq\}$ depending on the type of causal set. %
Therefore, the spike of neuron $j_1$ must also belong to the causal set of neuron $i_2$, implying
\begin{equation}
    B^{(\ell)}_{i_2j_1}=1,
\end{equation}
which contradicts the assumed submatrix.
Conversely, if neuron $i_2$ spikes before neuron $i_1$, then
$B^{(\ell)}_{i_2j_2}=1$ implies
\[
    t^{(\ell-1)}_{j_2}
    \diamond t^{(\ell)}_{i_2}
    < t^{(\ell)}_{i_1},
\]
where $\diamond\in\{<,\leq\}$ depending on the type of causal set.
Therefore $B^{(\ell)}_{i_1j_2}=1$, again yielding a
contradiction. 
If the two postsynaptic neurons spike simultaneously, their causal sets are identical, and hence the above pattern is also
impossible.
The same argument, with $j_1$ and $j_2$ exchanged, excludes the
second forbidden submatrix. 
Finally, a neuron that does not spike is assigned the empty causal set and therefore corresponds to an all-zero row, which cannot form either forbidden submatrix.
Hence $B^{(\ell)}$ contains neither forbidden $2\times2$ pattern and is therefore lonesum.
\end{proof}
\begin{remark}
    Every lonesum matrix also determines a nested causal-set assignment. 
Suppose that the causal sets $C_i$ and $C_k$ corresponding to rows $i$, $k$ of the lonesum matrix were incompatible. 
Then there would exist $j_1\in C_i\setminus C_k$ and $j_2\in C_k\setminus C_i$, so that the submatrix formed by rows $i,k$ and columns $j_1,j_2$ would be one of the two forbidden $2\times2$ matrices. 
Hence, if the matrix has $m$ rows, there exists a permutation $\pi\in S_m$ such that $C_{\pi(1)}\subseteq\cdots\subseteq C_{\pi(m)}$.
Zero rows correspond to the empty causal set.
This does not mean that a deep SNN can realise any possible assignment of lonesum matrices in practice.
\end{remark}

As shown in \cite[Theorem 3]{brewbaker2008combinatorial}, the number of lonesum matrices of dimension $n\times k$ is exactly given by the poly-Bernoulli number $\mathcal{B}_n^{-k}$.
Thus, we can use the poly-Bernoulli numbers to bound the number of causal pieces.
In addition, to estimate how the bound scales in general, we use an alternative bound for the number of lonesum matrices (\cref{lemma:poly}).
\begin{highlightbox}
\begin{lemma}\label{lemma:poly}
Let $\mathcal{B}_n^{-k}$ be the poly-Bernoulli numbers, which is exactly the number of lonesum matrices with dimension $n\times k$ for $n,k \in \mathbb{N}^+$. Then
\begin{equation}
    \mathcal{B}_n^{-k} \leq k! (k+1)^n \,.
\end{equation}
\end{lemma}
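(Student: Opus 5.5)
We bound the number of $n\times k$ lonesum matrices directly by an injective encoding, rather than estimating the poly-Bernoulli sum in \eqref{eq:polybernoulli}. By the Remark following \cref{lemma:lonesum}, the rows of a lonesum matrix, viewed as subsets $C_1,\dots,C_n\subseteq[k]$, form a chain under inclusion. It is therefore enough to count matrices whose row sets are pairwise comparable.

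The encoding goes as follows. Given such a matrix, choose a permutation $\sigma\in S_k$ of the columns that lists the column indices in order of decreasing column sum, breaking ties in some fixed way. Since the rows form a chain, each row set is an initial segment of this ordering, i.e. $C_i=\{\sigma(1),\dots,\sigma(r_i)\}$ where $r_i=|C_i|\in\{0,\dots,k\}$ is the $i$-th row sum.

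To see this, take a column $j\in C_i$ and a column $j'\notin C_i$. Every row containing $j'$ is a set in the chain that is not contained in $C_i$, so it contains $C_i$ and hence contains $j$. Thus the column sum of $j$ is at least that of $j'$. If the two sums are equal, the rows containing $j'$ are exactly the rows containing $j$. Then $j'$ has a row where $j$ is absent, which contradicts $j\in C_i$, unless column $j'$ is empty. So within a tie class of equal column sums, either all columns lie in $C_i$ or none do, apart from the empty columns. Empty columns lie in no row, so the tie-breaking among them is irrelevant. With any fixed tie-breaking rule, each $C_i$ is then an initial segment of $\sigma$.

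Hence the map
\[
\text{matrix}\ \longmapsto\ \bigl(\sigma,(r_1,\dots,r_n)\bigr)\in S_k\times\{0,\dots,k\}^n
\]
is injective, because $C_i=\sigma([r_i])$ recovers every row. This gives $\mathcal{B}_n^{-k}\le k!\,(k+1)^n$, using the identification of $\mathcal{B}_n^{-k}$ with the number of lonesum matrices from \cite[Theorem 3]{brewbaker2008combinatorial}.

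The main obstacle is the tie-breaking step: one must make sure that a single permutation works simultaneously for all rows. An alternative that avoids this subtlety is to use the formula \eqref{eq:polybernoulli} directly. One bounds $m!\stirlingii{n+1}{m+1}$ by the number of maps from $[n]$ to $\{0,\dots,m\}$, giving at most $(m+1)^n\le(k+1)^n$. One bounds the sum $\sum_{m}m!\stirlingii{k+1}{m+1}$ over $m\le k$ by the ordered Bell number of $k+1$. However, that quantity exceeds $k!$, so this route would need a finer pairing of the two factors. I would therefore present the chain/permutation encoding as the primary argument.
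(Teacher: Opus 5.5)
Your proof is correct and is essentially the paper's argument: order the columns by decreasing column sum, show that every row becomes an initial segment of this order, and count $k!\,(k+1)^n$. You phrase the count as an injection and get the prefix property from the chain structure of the rows, whereas the paper repeatedly removes the column of largest sum and invokes the forbidden $2\times 2$ submatrices directly.

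Your sentence treating equal column sums is garbled, and the empty-column exception is unnecessary. Row $i$ itself contains $j$ but not $j'$, so the rows containing $j'$ form a \emph{strict} subset of those containing $j$. Hence the column sum of $j$ strictly exceeds that of $j'$, and no tie across $C_i$ can ever arise.
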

\end{highlightbox}
\begin{proof}
    Let $M\in\{0,1\}^{n\times k}$ be lonesum, and let $c_j=\sum_{i=1}^n M_{ij}$ denote the column sums. 
    Further, let $\sigma$ be a permutation of the columns such that 
    $c_{\sigma(1)}\ge c_{\sigma(2)}\ge \cdots \ge c_{\sigma(k)}$.
    
    We start with the column with the largest column-sum.
    Assume there is a row $r$ for which $M_{r\sigma(1)} = 0$.
    We claim that in this case, $M_{rj} = 0$ $\forall j \neq \sigma(1)$.
    If there is an $\ell$ such that $M_{r\ell} = 1$, we have two cases:
    \begin{enumerate}
        \item There is at least one $i$ such that $M_{i\sigma(1)} = 1$ and $M_{il} = 0$. However, then $M$ contains at least one of the two submatrices (Eq.~\ref{eq:submatrices}) and is not lonesum, contradicting our initial assumption.
        \item Thus, for all $i \in \mathcal{I} = \{j: M_{j\sigma(1)} = 1 \}$, it has to follow that $M_{il} = 1$. 
        But then $c_\ell \geq |\mathcal{I}| + M_{r\ell} > |\mathcal{I}| = c_{\sigma(1)}$, contradicting our initial assumption that $\sigma(1)$ is the column with the largest column-sum.
    \end{enumerate}
    By removing column $\sigma(1)$ (through which the matrix stays lonesum), this argument can be repeated iteratively for all other columns.
    Thus, permuting the columns of $M$ using $\sigma$ leads to a matrix where the rows take the form $R_s = (1,\ldots,1,0,\ldots,0)$, i.e., $s$ times $1$ followed by only $0$ entries, with $s \in [0, k]$.
    We can then construct all lonesum matrices of dimension $n\times k$ in the following way:
    first, assign a $R_s$ to every row. There are, in total, $(k+1)^n$ such combinations.
    We then apply column permutations, leading to $k!(k+1)^n$ possible lonesum matrices.
    Compared to $\mathcal{B}_n^{-k}$, we potentially over-count matrices where we permuted identical columns, hence $\mathcal{B}_n^{-k} \leq k!(k+1)^n$.
\end{proof}

From \cref{lemma:lonesum,lemma:poly} follows immediately our improved bound (\cref{thm:lonesum_bound}).
\begin{highlightbox}
\begin{theorem}[Combinatoric upper bound]\label{thm:lonesum_bound}
    Let $\diamond\in\{<,\leq\}$ and let $\Phi$ be a feedforward single-spike neural network with $L$ layers, input dimension $N_0$, widths $N_1,N_2,\ldots,N_L$, and constant weights $W^{(\ell)} \in \mathbb{R}^{N_\ell \times N_{\ell-1}}$ as well as constant neuron parameters. Assume that every neuron of a layer $\ell$ always receives the same presynaptic spike times as input, as is, e.g., the case in SNNs with no synaptic delays. 
    Then
\begin{equation}\label{eq:polyBound}
    \mathfrak{C}_{\diamond}(\Phi)
\leq
\prod_{\ell=1}^{L}
\mathcal{B}_{N_\ell}^{-N_{\ell-1}} \,,
\end{equation}
    where $\mathcal{B}_n^{-k}$ are the poly-Bernoulli numbers as defined in Eq.~\ref{eq:polybernoulli}.
    For $N = \max\left\{N_0, ..., N_L\right\},$ it follows that 
\begin{equation}
\mathfrak{C}_{\diamond}(\Phi) \leq (N+1)^{2NL} \,.
\label{eq:upper_bound_width_lonesum}
\end{equation}
\end{theorem}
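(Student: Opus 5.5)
The plan is to bound causal pieces by realisable matrix tuples, exactly as in \cref{prop:trivialBound}, but now over a smaller set of matrices. By \cref{def:binarymatrices}, the map $b\mapsto \bigl(B_\diamond^{(1)}(b),\dots,B_\diamond^{(L)}(b)\bigr)$ determines the causal subnetwork $\mathcal{P}^{(L)}_{[N_L],\diamond}(b)$. Conversely, distinct causal subnetworks yield distinct tuples, since the tuple records, layer by layer, which neurons are in the subnetwork and their causal sets. Hence $\mathfrak{C}_\diamond(\Phi)$ is at most the number of tuples that actually occur.

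The tuples are restricted by \cref{lemma:lonesum}. Its hypothesis, that all neurons of a layer receive the same presynaptic spike times, is exactly the assumption of the theorem, so each $B^{(\ell)}_\diamond(b)$ is a lonesum matrix of size $N_\ell\times N_{\ell-1}$. One point needs a short check: rows of neurons not in the subnetwork are zeroed. Zeroing rows preserves lonesumness, because a zero row cannot participate in either forbidden $2\times 2$ submatrix. The lemma's proof handles this already, but I would state it explicitly. By \cite[Theorem 3]{brewbaker2008combinatorial}, there are exactly $\mathcal{B}_{N_\ell}^{-N_{\ell-1}}$ such matrices per layer. Bounding the number of tuples by the product over layers gives \eqref{eq:polyBound}.

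For \eqref{eq:upper_bound_width_lonesum}, I would apply \cref{lemma:poly} with $n=N_\ell$ and $k=N_{\ell-1}$ to get
\[
\mathcal{B}_{N_\ell}^{-N_{\ell-1}}\le N_{\ell-1}!\,(N_{\ell-1}+1)^{N_\ell}.
\]
Then I use $k!\le k^k\le (N+1)^N$ and $(k+1)^n\le (N+1)^N$, so each factor is at most $(N+1)^{2N}$. Monotonicity in $n$ and $k$ is immediate from the explicit bound. Taking the product over the $L$ layers gives $(N+1)^{2NL}$.

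I expect no real obstacle here; the only delicate point is the injectivity of the map from causal subnetworks to matrix tuples, together with the compatibility of zeroed rows with the lonesum property.
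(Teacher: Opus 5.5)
Your proposal is correct and follows the paper's proof essentially step for step. The paper also combines the lonesum lemma with the count of Brewbaker et al.\ to get the product of poly-Bernoulli numbers, and then applies the bound $\mathcal{B}_n^{-k}\le k!(k+1)^n$ with $k!\le (N+1)^N$ and $(k+1)^n\le (N+1)^N$ to obtain $(N+1)^{2NL}$. Your extra checks on injectivity and zero rows are sound. One small precision: a member neuron may itself have a zero row, so for $\ell<L$ membership in layer $\ell$ is read off from the nonzero columns of $B^{(\ell+1)}$, not from the neuron's own row.
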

\end{highlightbox}
\begin{proof}
Following \cref{lemma:lonesum}, the number of causal pieces can be bounded by the number of lonesum matrices of dimension $n\times k$, which is given by the poly-Bernoulli numbers $\mathcal{B}^{(-k)}_n$ \cite{brewbaker2008combinatorial}, see Eq.~\ref{eq:polybernoulli}.
This immediately proves Eq.~\ref{eq:polyBound}.
The poly-Bernoulli numbers can further be upper bounded using Lemma~\ref{lemma:poly}:
$\prod_{\ell=1}^{L}
\mathcal{B}_{N_\ell}^{-N_{\ell-1}} \leq \prod_{\ell=1}^{L}
N_{\ell-1}!(N_{\ell-1}+1)^{N_{\ell}} \leq \prod_{\ell=1}^{L} N!(N+1)^N \leq (N+1)^{2NL}$.
\end{proof}

\begin{remark}
    The poly-Bernoulli numbers arise from counting assignments of nested
    causal sets within a layer. Consider $n$ neurons receiving
    the same $k$ spike times. Their distinct nonempty causal
    sets form a chain
    \begin{align}
        C_1 \subsetneq \cdots \subsetneq C_m,
        \qquad C_0=\varnothing. \notag
    \end{align}
    The neurons partition into $m$ nonempty classes according
    to their causal sets $C_1,\ldots,C_m$, together with a possibly empty
    class assigned $C_0$. The neurons of the preceding layer partition into the
    nonempty increments
    \begin{align}
        D_r=C_r\setminus C_{r-1}
        \qquad\text{for every }r\in[m], \notag
    \end{align}
    together with the possibly empty set $D_0=[k]\setminus C_m$.
    Thus, these neurons are assigned to disjoint increments, rather
    than to the overlapping causal sets themselves.
    Each partition consists of $m$ ordered nonempty sets and one possibly
    empty set. Adding a distinguished element to the latter shows that
    the numbers of choices are
    \begin{align}
        m!\left\{\begin{matrix}n+1\\m+1\end{matrix}\right\}
        \quad\text{and}\quad
        m!\left\{\begin{matrix}k+1\\m+1\end{matrix}\right\}, \notag
    \end{align}
    respectively. Each pair uniquely determines a causal-set assignment,
    since $C_r=D_1\cup\dots\cup D_r$. Summing their products over
    $m\in\{0,\ldots,\min(n,k)\}$ gives $B_n^{(-k)}$.
\end{remark}

%% file: content/hyperplanes.tex
In the following, we derive upper bounds for the number of causal pieces using hyperplane arrangement arguments.
We first state the assumptions a spiking neuron model has to satisfy for our results to apply (\cref{ass:model}).
We then continue by deriving a general upper bound for neurons models satisfying this assumption (\cref{thm:hyper_general}), before deriving particular upper bounds for the nLIF neuron model with exponential synapses, the ReLU-type response model, and nLIF/LIF model with Dirac-$\delta$ synapses (\cref{thm:nLIF_hyperplane,thm:ReLU_hyperplane,thm:nLIF_Dirac,thm:LIF}).
Especially for neuron models with Dirac-$\delta$ synapses, due to the constraint of output spikes having to coincide with input spikes, we are able to derive much tighter bounds.

Generally, the bounds derived using hyperplane arguments complement the general combinatoric bound (\cref{thm:lonesum_bound}).
However, in terms of scaling behaviour with maximum width, input dimension, and depth, the hyperplane bounds provide far tighter scaling than the combinatoric ones (see \cref{tab:width-asymptotics}).

\subsection{Spiking neuron model assumptions}
We collect sufficient properties of a spiking neuron model that allows bounding the number of causal pieces via hyperplane arrangement arguments. To this end we introduce some additional notation and terminology. 

\begin{highlightbox}
\begin{definition}[Coordinate map]\label{def:coordmap}
A \textbf{coordinate map} is a strictly increasing homeomorphism $\rho\colon\R\to D$, where $D\subseteq\R$ is a nonempty open interval. We use the convention $\rho^{-1}(\infty):=\infty$.
\end{definition}
\end{highlightbox}
The principal examples are time coordinates $\rho(u)=u$ or, for $\tau\in(0,\infty)$, exponential coordinates $\rho(u)=e^{\frac{u}{\tau}}$ and $\rho(u)=-e^{-\frac{u}{\tau}}$, with ranges $\R$, $(0,\infty)$, and $(-\infty,0)$ respectively. The convention $\rho(\infty):=\infty$ is used to handle infinite spike times. 
\begin{highlightbox}
\begin{definition}[Weak order domains]
    Let $D\subseteq\R$, $J\subseteq\N$ finite and nonempty, and $q\in[|J|]$. An ordered partition $G=(G_1,\dots,G_q)$ of $J$ is called a \textbf{weak order} and
    \begin{align*}
        Q_G:=\{x\in D^J\colon \big((x_j=x_k \text{ if }\exists r\in[q]\colon j,k\in G_r)\wedge(x_j<x_k \text{ if }\exists r\in[q-1]\colon j\in G_r, k\in G_{r+1})\big)\ \forall j,k\in J\}
    \end{align*}
    is called a \textbf{weak order domain}.
\end{definition}
\end{highlightbox}
For example, let $J = \{0,1,2,3\}$, $D=(0,\infty)$, and $x = (\frac{1}{2},\frac{1}{2}, \frac{1}{4}, 1)$. 
Since $x_2 < x_0=x_1 < x_3$, the weak order associated with $x$ is $G=(\{2\},\{0,1\},\{3\})$ and therefore $x\in Q_G$.

\begin{highlightbox}
\begin{definition}[Transformed spike-time maps]\label{def:trans_spike_time}
Let $L\in\N$, let $N_0,\dots,N_L\in\N$ be the layer widths, and let $\rho\colon\R\to D \subseteq{\mathbb{R}}$ be a coordinate map.
For neuron $i\in[N_\ell]$ in layer $\ell\in[L]$, a \textbf{transformed spike-time map} is a map
\begin{align*}
    f_i^{(\ell)}\colon (D\cup\{\infty\})^{N_{\ell-1}}&\to D\cup\{\infty\} \,,
\end{align*}
that satisfies, for every $b\in\R^{N_0}$, that
\begin{align*}
    f_i^{(\ell)}(\rho(t^{(\ell-1)}(b)))=\rho(t^{(\ell)}_i(b)) \,,
\end{align*}
where $\rho$ is applied componentwise, and we write
\begin{align*}
        t^{(r)}(b):=(t_1^{(r)}(b),\dots, t_{N_r}^{(r)}(b))\in (\R\cup\{\infty\})^{N_r} \,,
\end{align*}
for the vector of spike times in layer $r\in[L]$, with $t^{(0)}(b):=b$.
For $z\in D^{N_0}$, we define recursively
\begin{align*}
    x^{(0)}(z):=z\,,
    \qquad
    x^{(\ell)}(z):=(f_i^{(\ell)}(x^{(\ell-1)}(z)))_{i\in [N_\ell]} \,.
\end{align*}
\end{definition}
\end{highlightbox}
Fixing a weak order $G$ isolates a region of spike-time space $Q_G$ on which the relative arrival order of all finite presynaptic spikes is fixed.
\begin{highlightbox}
\begin{definition}[Local restriction \& Causal-set maps]\label{def:causal_maps}
For neuron $i\in[N_\ell]$ in layer $\ell\in[L]$, a nonempty $J\subseteq[N_{\ell-1}]$, a weak order $G$ of $J$, and $y=(y_j)_{j\in J}\in Q_G$, where $y_j$ denotes the transformed spike
time of neuron $j$, we define the \textbf{local restriction}
\begin{align*}
    f_{i,G}^{(\ell)}\colon Q_G\to D\cup\{\infty\} \,,
\end{align*}
by 
\begin{align*}
    f_{i,G}^{(\ell)}(y):=f_i^{(\ell)}(\tilde{y}),\ \text{where}\ \tilde{y}_j:=\begin{cases}y_j, & j\in J \,,\\ \infty, & j\in [N_{\ell-1}]\backslash J \,.\end{cases}.
\end{align*}
Let $\diamond\in\{<,\leq\}$. The associated \textbf{causal-set map} $\mathcal{C}^{(\ell)}_{i,G,\diamond}\colon Q_G\to2^J$ is given by
\begin{align*}
    \mathcal{C}^{(\ell)}_{i,G,\diamond}(y):=\begin{cases}
        \{j\in J\colon y_j\diamond f_{i,G}^{(\ell)}(y)\}, & f_{i,G}^{(\ell)}(y) < \infty \,,\\
        \emptyset, & f_{i,G}^{(\ell)}(y)=\infty \,.
    \end{cases}.
\end{align*}
For $J=\{j\in[N_{\ell-1}]\colon t^{(\ell-1)}_j(b)<\infty\}\neq\emptyset$ and a weak order $G$ of $J$ with $\rho(t_J^{(\ell-1)}(b))\in Q_G$, it is related to causal sets w.r.t.\@ untransformed input $b\in\R^{N_0}$ via  
\begin{align*}
    \mathcal{C}^{(\ell)}_{i,G,\diamond}(\rho(t_J^{(\ell-1)}(b)))=\mathcal{C}^{(\ell)}_{i,\diamond}(b).
\end{align*}
\end{definition}
\end{highlightbox}

In the following, a polyhedral set is an intersection of finitely many affine hyperplanes and open or closed affine halfspaces.
Due to the mechanism of neurons spiking when a potential reaches a certain threshold, causal pieces can often be described via such intersections of halfspaces, potentially after some coordinate transformation. This enables us to represent causal pieces as unions of polyhedral sets respectively their preimage under the coordinate transformation.
\cref{ass:model} states sufficient requirements to allow deriving upper bounds via hyperplane arrangement arguments.

\begin{highlightbox}
\begin{assumption}\label{ass:model}
    Let $\diamond\in\{<,\leq\}$, and let $\Phi$ be a feedforward single-spike network with depth $L\in\N$ and layer widths $N_0,\dots,N_L\in\N$. Suppose there exist \textbf{transformed spike-time maps} $f_i^{(\ell)}$ with \textbf{common coordinate map} $\rho\colon\R\to D\subseteq\R$ such that, for every neuron $i\in[N_\ell]$, layer $\ell\in[L]$, and $y\in(D\cup\{\infty\})^{N_{\ell-1}}$,
    \begin{align}\label{eq:nfbfi}
        f_i^{(\ell)}(y)\geq \min_{j\in[N_{\ell-1}]}y_j \,.
    \end{align}
    Moreover, there exist \textbf{nondecreasing functions $p\colon\N\to\N$ and $h\colon\N\to\N_0$} for which the following holds.
    Fix $\ell\in[L]$, $i\in[N_\ell]$, a nonempty $J\subseteq[N_{\ell-1}]$, and a weak order $G$ on $J$. 
    Then there exist
    \begin{itemize}
        \item a partition $\mathcal{A}$ of $Q_G$ into at most $p(|J|)$ nonempty polyhedral sets,
        \item affine functions $g_1,\dots g_H\colon \R^J\to\R$ for some $H\leq h(|J|)$,
    \end{itemize} 
    such that:
    \begin{enumerate}[(i)]
        \item \textbf{Local affine structure:} For every $A\in\mathcal{A}$, the map $\mathcal{C}_{i,G,\diamond}^{(\ell)}$ is constant on $A$, and either $f^{(\ell)}_{i,G}\equiv\infty$ on $A$, or $f^{(\ell)}_{i,G}\colon A\to D$ is affine.
        
        \item \textbf{Hyperplane separation:} For every $\sigma\in\{-1,0,1\}^H$, the set
        \begin{align*}
            \{y\in Q_G\colon\operatorname{sgn} g_k(y)=\sigma_k\ \forall k\in[H]\}
        \end{align*}
        is empty or contained in one set $A\in\mathcal{A}$.
        \item \textbf{Symmetry:} One of the following two conditions holds throughout the network:
        \begin{description}
            \item[(T)] $D=\R$ and $g_k(y+c\mathbf{1}_J)=g_k(y)$ for all $k\in[H]$, $y\in\R^J$, $c\in\R$, where $\mathbf{1}_J:=(1)_{j\in J}$. Moreover, for every $y\in Q_G$ with $f_{i,G}^{(\ell)}(y)<\infty$ and every $c\in\R$, it holds that
            \begin{align*}
                f_{i,G}^{(\ell)}(y+c\mathbf{1}_J) = f_{i,G}^{(\ell)}(y) + c.
            \end{align*}
            \item[(H)]There exists a fixed $s_H\in\{-1,1\}$ such that
$D=s_H(0,\infty)$, where $s_H(0,\infty):=\{s_H r:r>0\}$. 
Moreover, $g_k$ is linear for every $k\in[H]$. 
For every $y\in Q_G$ with $f_{i,G}^{(\ell)}(y)<\infty$ and every $\lambda > 0$, it holds that
            \begin{align*}
                f_{i,G}^{(\ell)}(\lambda y)=\lambda f_{i,G}^{(\ell)}(y).
            \end{align*}
        \end{description}
    \end{enumerate}
\end{assumption}
\end{highlightbox}
The condition in \cref{eq:nfbfi} ensures that no output spike occurs
before the earliest input spike.
We then consider a single neuron and fix a weak ordering $G$ of its input spike times. 
The weak-order domain $Q_G$ is the subset of the transformed input space on which this ordering is satisfied.

Fixing the input order does not, however, determine the causal set.
The domain $Q_G$ is therefore further partitioned into polyhedral
regions $A\in\mathcal A$ on each of which the causal set is fixed.
For example, if
\begin{align*}
    G=(\{2\},\{4\},\{1,3\}) \,,
\end{align*}
then
\begin{align*}
    y_2<y_4<y_1=y_3 \,,
\end{align*}
and the possible nonempty causal sets are
\begin{align*}
    \{2\},\qquad
    \{2,4\},\qquad
    \{2,4,1,3\} \,.
\end{align*}
The affine functions $g_1,\dots,g_H$ provide hyperplanes whose sign
patterns distinguish the regions of $\mathcal A$.

The three conditions have the following interpretation.
\textbf{(i)} On every region $A\in\mathcal A$, the causal set is
constant. Moreover, whenever the neuron fires, its transformed output
spike time is an affine function of its transformed input spike times.
\textbf{(ii)} The signs of $g_1,\dots,g_H$ determine the region
$A\in\mathcal A$: two points with the same sign pattern must lie in the
same region.
\textbf{(iii)} The neuron model satisfies one of two global symmetries.
In case~(T), shifting all input spike times by the same amount shifts the
output spike time by the same amount while leaving the switching
conditions unchanged. In case~(H), the corresponding symmetry is a
common positive scaling of the transformed spike times. It should be noted that these symmetry conditions only reduce the effective dimension of each layer by one, and without them the bound in \Cref{thm:hyper_general} still holds with $d_\ell=\min_{0\leq r\leq\ell}N_r$ instead of $d_\ell:=\min_{0\leq r\leq\ell} (N_r-1)$.

The symmetry requirement is related to the monotone scaling property of \cite[Definition 3.1(iii)]{hundrieser2025universal}, which requires spike-train maps to commute with every strictly increasing bijection of the time domain. For the coordinate maps used in our particularisations, both (T) and (H) express equivariance under common translations of the original spike times. The monotone scaling property is stronger, as it also allows arbitrary changes to the gaps between successive spikes while preserving their order. When required of every neuron’s spike-time map, it implies that causal sets depend only on the weak order of incoming spike times. This holds for our nLIF model with Dirac synapses, whereas ramp-response and exponential-synapse nLIF neurons generally retain dependence on the time gaps and only satisfy the weaker translation symmetry.

\subsection{General hyperplane bound}

We first provide a general hyperplane bound before specifying it for particular neuron models.
For stating the bound, we require \cref{lem:sign_counting}.

\begin{highlightbox}
\begin{lemma}\label{lem:sign_counting}
    Let $d,H\in\N_0$ and let $g_1,\dots,g_H\colon\R^d\to\R$ be affine functions. Then
    \begin{align*}
        |\{(\operatorname{sgn}g_1(x),\dots,\operatorname{sgn}g_H(x))\colon x\in\R^d\}|\leq \sum_{j=0}^d 2^j{H \choose j}.
    \end{align*}
\end{lemma}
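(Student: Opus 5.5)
We proceed by induction on $d$, simultaneously for all $H$, using the classical deletion–restriction argument for hyperplane arrangements. It is convenient to prove the slightly stronger statement for an arbitrary affine subspace $E\subseteq\R^n$ of dimension $d$ in place of $\R^d$. An affine subspace of dimension $d$ is affinely isomorphic to $\R^d$, and restrictions of affine functions remain affine, so this is equivalent. Write $S_H(d)$ for the maximal number of sign vectors realised by $H$ affine functions on a $d$-dimensional space, and $\Psi(d,H):=\sum_{j=0}^d 2^j\binom{H}{j}$. The base cases are straightforward. For $d=0$ the space is a point, so only one sign vector occurs, and $\Psi(0,H)=1$. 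For $H=0$ only the empty vector occurs, and $\Psi(d,0)=1$.

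For the inductive step ($d\ge1$, $H\ge1$), we add $g_H$ to the family $g_1,\dots,g_{H-1}$ and count how many new sign vectors appear. Fix a sign vector $\sigma'\in\{-1,0,1\}^{H-1}$ realised by $g_1,\dots,g_{H-1}$, and let $F_{\sigma'}$ be its (convex, relatively open) realisation set. The number of extensions $\sigma=(\sigma',\sigma_H)$ is the number of values $\operatorname{sgn}g_H$ takes on $F_{\sigma'}$, which is between $1$ and $3$.

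If $g_H$ takes value $0$ somewhere on $F_{\sigma'}$ or takes both signs there, then by convexity and continuity $F_{\sigma'}$ meets the zero set $Z:=\{g_H=0\}$, and $\sigma'$ is realised on $Z$. We therefore split into two cases.

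\textbf{Case $g_H$ constant.} If $g_H$ is constant, nothing new appears.

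\textbf{Case $g_H$ nonconstant.} Then $Z$ is an affine hyperplane of dimension $d-1$, and each $\sigma'$ realised on $Z$ contributes at most $2$ extra sign vectors beyond the first. Hence
\begin{align*}
S_H(d)\le S_{H-1}(d)+2\,S_{H-1}(d-1),
\end{align*}
where the restrictions of $g_1,\dots,g_{H-1}$ to $Z$ are affine functions on a $(d-1)$-dimensional space.

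By induction,
\begin{align*}
S_H(d)\le \Psi(d,H-1)+2\Psi(d-1,H-1)=\sum_{j=0}^d 2^j\Big[\tbinom{H-1}{j}+\tbinom{H-1}{j-1}\Big]=\Psi(d,H),
\end{align*}
by Pascal's rule, with the $j=0$ term handled by $\binom{H-1}{-1}=0$.

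The main point needing care is the claim that each old sign vector gains at most two new ones, and only when it is realised on $Z$. We justify it precisely as follows. If $\operatorname{sgn}g_H$ takes at least two values on the convex set $F_{\sigma'}$, then either the value $0$ is attained, or both $\pm1$ are attained, in which case the intermediate value theorem on a segment inside $F_{\sigma'}$ gives a zero. Either way $F_{\sigma'}\cap Z\neq\emptyset$. Beyond this, the proof is the standard Zaslavsky/Buck-type count, and the remaining steps are routine bookkeeping.
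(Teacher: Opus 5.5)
Your proof is correct, but it takes a different route from the paper's.

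The paper first discards constant functions and duplicate zero sets. It then identifies the realised sign vectors with the nonempty faces of the resulting hyperplane arrangement. Finally it imports the face bound $\sum_{j=d-k}^{d}\binom{j}{d-k}\binom{H}{j}$ on the number of $k$-dimensional faces from Edelsbrunner--O'Rourke--Seidel, sums over $k$, and swaps the order of summation to reach $\sum_{j}2^j\binom{H}{j}$.

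You instead prove the bound from scratch via the deletion--restriction recurrence $S_H(d)\le S_{H-1}(d)+2S_{H-1}(d-1)$. Its key step is that a sign vector of $g_1,\dots,g_{H-1}$ can acquire more than one extension only if its convex realisation set meets $\{g_H=0\}$. You justify this correctly with convexity and the intermediate value theorem on a segment.

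Your version is self-contained and handles degenerate families (constant $g_k$, repeated or opposite hyperplanes) automatically, so no w.l.o.g.\ reduction is needed. The paper's version is shorter given the citation and carries finer per-dimension face counts, which are not needed for this lemma.

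One small wording fix: your recurrence uses $S_{H-1}(d)$ at the same $d$, so the induction should be on $H$ (for all $d$ simultaneously) or on $d+H$, rather than on $d$. Your base cases $d=0$ and $H=0$ already suffice for either formulation.
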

\end{highlightbox}
\begin{proof}
    We assume w.l.o.g.\@ that the functions $g_1,\dots,g_H$ are non-constant and have distinct zero hyperplanes, as the alternative would only reduce the cardinality of the set. The considered sign patterns correspond bijectively to the nonempty faces (of any dimension) of the arrangement of the zero hyperplanes of the functions, in the sense of \cite{EdelsbrunnerORourkeSeidel1986}. By \cite[Lemma 2.5(i)]{EdelsbrunnerORourkeSeidel1986} the number of $k$-dimensional faces is bounded by $\sum_{j=d-k}^d{j \choose d-k}{H \choose j}$. Summing over the dimensions yields
    \begin{align*}
        \sum_{k=0}^d\sum_{j=d-k}^d{j \choose d-k}{H \choose j}=\sum_{j=0}^d{H\choose j}\sum_{r=0}^j{j\choose  r}=\sum_{j=0}^d 2^j {H\choose j}.
    \end{align*}
\end{proof}

With \cref{lem:sign_counting} and under \cref{ass:model}, we derive the general hyperplane bound (\cref{thm:hyper_general}).
\begin{highlightbox}
\begin{theorem}\label{thm:hyper_general}
    Let $\diamond\in\{<,\leq\}$ and let $\Phi$ be a feedforward single-spike network with depth $L$ and widths $N_0,\dots,N_L$. If it satisfies \cref{ass:model} with $p\colon\N\to\N$, $h\colon\N\to\N_0$, we have 
    \begin{align*}
        \mathfrak{C}_{\diamond}(\Phi) \leq \prod_{\ell=1}^L \sum_{j=0}^{d_{\ell-1}} 2^j{{N_{\ell-1}\choose 2}\choose j}\min\left\{p(N_{\ell-1})^{N_\ell},\ \sum_{j=0}^{d_{\ell-1}}2^j{N_\ell h(N_{\ell-1})\choose j}\right\},
    \end{align*}
    where $d_\ell:=\min_{0\leq r\leq\ell} (N_r-1)$, $\ell\in\{0,\dots,L\}$. 
\end{theorem}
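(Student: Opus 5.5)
We argue by induction over the layers, refining a partition of the (transformed) input space $D^{N_0}$ layer by layer. The inductive claim is that after processing layers $1,\dots,\ell$, the set $D^{N_0}$ (equivalently $\R^{N_0}$ via $\rho$) is covered by at most $\prod_{r=1}^{\ell}(\cdots)$ cells with three properties. On each cell the causal subnetwork up to layer $\ell$, i.e.\ the matrices $B^{(1)},\dots,B^{(\ell)}$ together with the set of firing neurons, is constant. On each cell the finite coordinates of $x^{(\ell)}(z)$ are affine functions of $z$. Finally, the image of each cell under $x^{(\ell)}$, restricted to its finite coordinates $J_\ell$, lies in an affine subspace of dimension at most $d_\ell+1$ which is invariant under the symmetry. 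In case (T) this means the subspace contains the direction $\mathbf{1}$. In case (H) it is a linear subspace, i.e.\ a cone. The rank bound needs justification. Each map is a composition of affine maps, so its rank is at most $\min_{r\le \ell}N_r$. The symmetries (T) and (H), applied in every layer, make the whole composed map equivariant under $z\mapsto z+c\mathbf 1$ (resp.\ $z\mapsto\lambda z$). Hence that one direction is carried through the layers, and quotienting by it drops the effective dimension by one, giving $d_\ell=\min_{0\le r\le \ell}(N_r-1)$.

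For the inductive step, fix a cell $P$ from stage $\ell-1$, with finite index set $J=J_{\ell-1}$ and affine image of effective dimension $d\le d_{\ell-1}$. First we refine $P$ by the weak order of $x^{(\ell-1)}_J$. This order is determined by the signs of the at most $\binom{N_{\ell-1}}{2}$ functions $x_j-x_k$. Each of these is affine on $P$ and invariant under the symmetry: translation-invariant in case (T), linear in case (H). They therefore descend to affine functions on a $d$-dimensional quotient, and \cref{lem:sign_counting} yields at most $\sum_{j=0}^{d}2^j\binom{\binom{N_{\ell-1}}{2}}{j}$ sign patterns. This is the first factor, and each resulting subcell lies in a single $Q_G$.

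Second, on such a subcell we invoke \cref{ass:model} for every $i\in[N_\ell]$. This refines the subcell in two alternative ways, and we keep the better count. In the first way, the neuron's region $A\in\mathcal A$ is chosen independently for each neuron, which gives at most $p(|J|)^{N_\ell}\le p(N_{\ell-1})^{N_\ell}$ cells by monotonicity of $p$. In the second way, all $N_\ell h(N_{\ell-1})$ functions $g_k\circ x^{(\ell-1)}$ are pulled back together. By (T)/(H) these are again symmetry-invariant affine functions of effective dimension $d$, so \cref{lem:sign_counting} gives $\sum_{j=0}^{d}2^j\binom{N_\ell h(N_{\ell-1})}{j}$. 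By (ii) each sign pattern lies in a single $A$ for every neuron. Then (i) gives constant causal sets, hence constant $B^{(\ell)}$ and constant firing set $J_\ell$, together with affine $x^{(\ell)}$ on the finite coordinates. The dimension bound updates as $\min(d,N_\ell-1)$, because the image now also lies in $D^{J_\ell}$ with $|J_\ell|\le N_\ell$. The symmetry is inherited because each $f_{i,G}$ commutes with translation (resp.\ scaling).

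The edge cases need separate treatment. If $J=\emptyset$, all later layers are silent by \eqref{eq:nfbfi}, and the cell is not split further. Since the causal subnetwork of \cref{def:cpath} is a function of $(B^{(1)},\dots,B^{(L)})$, counting the final cells bounds $\mathfrak C_\diamond(\Phi)$. The hyperplane counts remain valid when applied within a cell, because restricting an arrangement to a convex subset can only reduce the number of sign patterns.

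The main obstacle is to make the dimension reduction by one rigorous. This requires showing that the symmetry propagates, so that every switching function is genuinely a function on a $d_{\ell-1}$-dimensional quotient, and checking that the cells, which are polyhedra in the lifted coordinates $z$, combine correctly with \cref{lem:sign_counting}. For this we will parametrize each cell's image affinely by $\R^{d+1}$, factor out the symmetry direction (or, in case (H), use that linear functions on a cone are determined by their values on a slice), and apply the lemma in $\R^{d}$.
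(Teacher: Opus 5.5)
Your proposal is correct and follows essentially the same route as the paper. Both arguments use an inductive refinement of $D^{N_0}$ that keeps causal sets constant, keeps $x^{(r)}_{J_r}$ affine with image of affine dimension at most $d_r+1$, and preserves symmetry invariance; each layer is first split by weak orders via the $\binom{N_{\ell-1}}{2}$ comparisons, then by the better of the product-of-partitions or joint-sign-pattern refinement, both counted with the sign-counting lemma after removing the symmetry direction. The differences are cosmetic: in case (T) you quotient by $\mathbf{1}$ where the paper intersects with the section $\sum_{j\in J}y_j=0$, and the invariance of the cells themselves (the paper's property (c)), which you need for the symmetry direction to propagate, follows from Assumption (ii)--(iii) as you indicate, since each set in $\mathcal{A}$ is a union of sign regions of invariant test functions.
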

\end{highlightbox}
\begin{proof}
    We iteratively construct partitions $\cP_0,\dots,\cP_L$ of $D^{N_0}$ into nonempty polyhedral sets. We require the following properties to hold for every $r\in\{0,\dots,L\}$ and every region $P\in\cP_r$:
    \begin{enumerate}[(a)]
        \item \label{ind_ass_a} Causal sets are constant on $P$, i.e.\@ for all $s\in[r]$, $i\in[N_s]$, $z,z'\in P$, we have
        \begin{align*}
            \cC_{i,\diamond}^{(s)}(\rho^{-1}(z))=\cC_{i,\diamond}^{(s)}(\rho^{-1}(z')) \,,
        \end{align*}
        where $\rho^{-1}$ is applied componentwise.
        \item \label{ind_ass_b} The index sets of neurons that fire in finite time are constant on $P$, i.e.\@ for all $z,z'\in P$, we have
        \begin{align*}
           \{i\in[N_r]\colon x_i^{(r)}(z)<\infty\} = \{i\in[N_r]\colon x_i^{(r)}(z')<\infty\}=:J_r(P) \,.
        \end{align*}
        If $J_r(P)\neq\emptyset$, the map $x^{(r)}_{J_r(P)}:=(x^{(r)}_j)_{j\in J_r(P)}$ is affine on $P$, and $x^{(r)}_{J_r(P)}(P)$ has affine dimension\footnote{The affine dimension of a set $S$ is the smallest integer $n=:\dim S$ such that $S$ is contained in an $n$-dimensional affine subspace.}  at most $d_r+1$.
        \item \label{ind_ass_c} In case of condition (T), it holds, for every $c\in\R$, that
        \begin{align*}
            P+c\mathbf{1}&=P \,,
        \end{align*}
        and, for every $j\in J_r(P)$, $z\in P$, that
        \begin{align*}
            x^{(r)}_{j}(z+c\mathbf{1})&=x^{(r)}_{j}(z)+c \,.
        \end{align*}
        In case of condition (H), it holds, for every $\lambda\in(0,\infty)$, that
        \begin{align*}
            \lambda P&=P \,,
        \end{align*}
        and, for every $j\in J_r(P)$, $z\in P$, that
        \begin{align*}
            x^{(r)}_{j}(\lambda z)&=\lambda x^{(r)}_{j}(z)\,.
        \end{align*}
    \end{enumerate}
    We start the induction with
    \begin{align*}
        \cP_0:=\{D^{N_0}\}\,.
    \end{align*}
    The required properties hold since $x^{(0)}$ is the identity and $N_0=d_0+1$.
    For the induction step we assume that these properties hold for $\ell-1$ and observe the following. Let $\ell\in[L]$, $P\in\cP_{\ell-1}$, and write $J:=J_{\ell-1}(P)$ for simplicity of notation. If $J=\emptyset$ we can simply include $P$ in $\cP_\ell$ since the required conditions continue to hold and no subdivision is needed. Otherwise, we define the set
    \begin{align*}
        Y_P:=x^{(\ell-1)}_{J}(P)\,.
    \end{align*}
    We note that $Y_P$ has affine dimension at most $d_{\ell-1}+1$ due to induction assumption \eqref{ind_ass_b}, and that $Y_P+c\mathbf{1}=Y_P$ or $\lambda Y_P=Y_P$ respectively by induction assumption \eqref{ind_ass_c}. Before we count sign patterns we can reduce the dimension by one if we account for the respective invariances.
    Specifically, we argue that the section
    \begin{align*}
        \widehat{Y}_P:=\begin{cases}
            Y_P\cap\{y\in D^{J}\colon\sum_{j\in J}y_j=0\}, & \text{if (T)}\\
            Y_P\cap\{y\in D^{J}\colon s_H\sum_{j\in J}y_j=1\}, & \text{if (H)}
        \end{cases}
    \end{align*}
    realises the same sign patterns of the coordinate comparisons $y_j-y_k$ as $Y_P$ and has affine dimension at most $d_{\ell-1}$. We can, for every $y
   \in Y_P$, define its normalisation $\widehat{y}\in \widehat{Y}_P$ as
    \begin{align*}
        \widehat{y}:=\begin{cases}
            y - \tfrac{1}{|J|}(\sum_{j\in J}y_j )\mathbf{1}, & \text{if (T)}\\
            \frac{y}{s_H\sum_{j\in J}y_j}, & \text{if (H)}
        \end{cases}
    \end{align*}
    and note that $\operatorname{sgn}(\widehat{y}_i-\widehat{y}_j)=\operatorname{sgn}(y_i-y_j)$ for $y\in Y_P$, $i,j\in J$.
    Moreover, we note that 
    $y\mapsto \sum_{j\in J}y_j$ respectively $y\mapsto s_H\sum_{j\in J}y_j$ are nonconstant functionals on $Y_P$, which implies 
    \begin{align*}
        \dim(\widehat{Y}_P)\leq \dim(Y_P)-1\leq d_{\ell-1}\,.
    \end{align*}
    The number of possible weak orders of a vector $y=(y_j)_{j\in J}$ is determined by the ${|J| \choose 2}\leq{N_{\ell-1} \choose 2}$ comparisons $g_{jk}(y):=y_j-y_k$ with $j,k\in J$ and $j<k$. Therefore we can apply \Cref{lem:sign_counting} to the affine hull of $\widehat{Y}_P$ to bound the number of possible weak orders on $\widehat{Y}_P$ (and, by the above, on $Y_P$) by 
    \begin{align*}
        \sum_{j=0}^{d_{\ell-1}}2^j{{|J|\choose 2}\choose j}\,.
    \end{align*}
    For each weak order $G$ of $J$ realised on $Y_P$, we define
    \begin{align*}
        P_G:=\{z\in P\colon x^{(\ell-1)}_{J}(z)\in Q_G\}\,.
    \end{align*}
    These sets form a polyhedral partition of $P$.
    We fix one such $G$. For $i\in[N_\ell]$, let $\mathcal{A}_i$ and $g_{i,1},\dots,g_{i,h_i}$ with $h_i\leq h(|J|)$ be the partition, respectively affine functions, guaranteed by Assumption~\ref{ass:model}. We can refine $P_G$ in one of the following two ways.
    The first option is taking, for every choice of $(A_1,\dots,A_{N_\ell})\in\prod_{i\in[N_\ell]}\mathcal{A}_i$, the set
    \begin{align*}
        \{z\in P_G\colon x^{(\ell-1)}_{J}(z)\in\mathcal A_i \ \forall i\in[N_\ell]\}\,,
    \end{align*}
    if it is nonempty. This results in a refinement into at most $p(|J|)^{N_\ell}$ many sets.
    
    The second option is partitioning $P_G$ according to the joint sign patterns of the $g_{i,k}\circ x^{(\ell-1)}_{J}$, where $i\in[N_\ell]$, $k\in[h_i]$. Specifically, for each $\sigma\in\prod_{i\in[N_\ell]}\{-1,0,1\}^{h_i}$, we define
    \begin{align*}
        R_\sigma:=\{z\in P_G\colon\operatorname{sgn}g_{i,k}(x^{(\ell-1)}_{J}(z))=\sigma_{i,k}\ \forall i\in[N_\ell],k\in[h_i]\} \,,
    \end{align*}
    and note that for nonempty $R_\sigma$ and each $i\in [N_\ell]$, Assumption~\ref{ass:model}(ii) provides one $A_i\in \mathcal{A}_i$ such that $x^{(\ell-1)}_{J}(R_\sigma)\subseteq A_i$.
    Using \Cref{lem:sign_counting} and the normalisation argument above, we can see that the number of nonempty $R_\sigma$ is bounded by
    \begin{align*}
        \sum_{j=0}^{d_{\ell-1}}2^j{N_\ell h(|J|)\choose j}\,,
    \end{align*}
    where we used that, for every $y\in Y_P\cap Q_G$, its normalisation $\widehat{y}$ is in $\widehat{Y}_P\cap Q_G$ and $\operatorname{sgn}g_{i,k}(\widehat{y})=\operatorname{sgn}g_{i,k}(y)$ due to Assumption~\ref{ass:model}(iii).
    By choosing the refinement option with the smaller bound for each $G$, we obtain a subdivision of $P$ into at most
    \begin{align}\label{eq:layer_est}
        \sum_{j=0}^{d_{\ell-1}}2^j{{|J|\choose 2}\choose j}\min\left\{p(|J|)^{N_\ell},\sum_{j=0}^{d_{\ell-1}}2^j{N_\ell h(|J|)\choose j}\right\} \,,
    \end{align}
    many sets.\\

    It remains to verify that every element $P'$ of the newly obtained partition satisfies the induction properties.
    First, every $P'$ is polyhedral as it is derived by intersecting a polyhedral set with finitely many affine equality or inequality conditions, where we note that affine (in)equalities on $x^{(\ell-1)}_{J}(z)$ induce affine (in)equalities on $z$ since $x^{(\ell-1)}_{J}$ is affine on $P$. 
    We next verify \eqref{ind_ass_a}. By construction, for every $i\in[N_\ell]$ there exists $A_i\in\mathcal A_i$ such that $x^{(\ell-1)}_J(P')\subseteq A_i$. 
    For every  $i\in[N_\ell]$ and $z\in P'$, we have
    \begin{align*}
        \cC_{i,\diamond}^{(\ell)}(\rho^{-1}(z))=\cC_{i,G,\diamond}^{(\ell)}(x_J^{(\ell-1)}(z))
    \end{align*}
    as a direct consequence of \Cref{def:trans_spike_time} and \Cref{def:causal_maps}. The right-hand side is constant on $P'$, since $x_J^{(\ell-1)}(P')\subseteq A_i$ and \Cref{ass:model}(i) requires $\cC_{i,G,\diamond}^{(\ell)}$ to be constant on $A_i$. The causal sets in earlier layers remain constant because $P'\subseteq P$. Hence \eqref{ind_ass_a} holds.
    
    Next, we look at \eqref{ind_ass_b}. 
    By \Cref{ass:model}(i), each $f_{i,G}^{(\ell)}|_{A_i}$ is either identically infinite or finite throughout $A_i$. Since $x_J^{(\ell-1)}(P')\subseteq A_i$, the index set $J'$ of finite outputs is constant on $P'$.
    This covers the first part of \eqref{ind_ass_b}.
    For the second part, we note that, in case of $J'\neq\emptyset$, \cref{ass:model}(i) combined with $x^{(\ell-1)}_{J}(P')\subseteq A_i$ ensure that $x^{(\ell)}_{J'}(z)$ can be written as an affine function of $x^{(\ell-1)}_{J}$(z), and thus
    \begin{align*}
        \dim(x^{(\ell)}_{J'}(P'))\leq \min\{\dim(x^{(\ell-1)}_{J}(P')),|J'|\}\leq \min\{d_{\ell-1}+1,N_\ell\}=d_{\ell}+1 \,.
    \end{align*}
    
    In order to verify \eqref{ind_ass_c}, we note that the conditions defining $P'$ are invariant under the respective transformations.
    The respective equivariance properties are guaranteed by \cref{ass:model}(iii).
    Moreover, the respective invariance properties follow from \cref{ass:model}(iii), since \cref{ass:model}(ii) ensures that two points of $Q_G$ that produce the same sign for every $g_{i,k}$ belong to the same element in $\mathcal{A}_i$. 

    Due to monotonicity, $p(|J|)\leq p(N_{\ell-1})$ and $h(|J|)\leq h(N_{\ell-1})$.
    Thus, multiplying \eqref{eq:layer_est} across the layers, we obtain
    \begin{align*}
        |\mathcal{P}_L|\leq \prod_{\ell\in[L]} \sum_{j=0}^{d_{\ell-1}} 2^j{{N_{\ell-1}\choose 2}\choose j}\min\left\{p(N_{\ell-1})^{N_\ell},\sum_{j=0}^{d_{\ell-1}}2^j{N_\ell h(N_{\ell-1})\choose j}\right\} \,.
    \end{align*}
    To see that this transfers to a bound on the number of causal pieces, we define, for $P\in\mathcal{P}_L$ the set
    \begin{align*}
        B_P:=\{b\in\R^{N_0}\colon \rho(b)\in P\} \,,
    \end{align*}
    where $\rho$ acts componentwise. As $\rho$ is bijective the sets $B_P$, $P\in\mathcal{P}_L$, constitute a partition of the original input domain. By \eqref{ind_ass_a}, we know that every neuron's causal set is constant on $B_P$, and thus all elements in $B_P$ have the same causal subnetwork. In particular each $B_P$ is contained in some causal piece, and different causal pieces cannot contain the same $B_P$. 
    Thus there cannot be more causal pieces than elements of $\mathcal{P}_L$, and we have $\mathfrak{C}_\diamond(\Phi)\leq|\mathcal{P}_L|$.
\end{proof}

\begin{highlightbox}
\begin{corollary}\label{cor:hyper_general}
    Let $\diamond\in\{<,\leq\}$, and let $\Phi$ be a feedforward single-spike network with depth $L$ and widths $N_0,\dots,N_L$. Let it satisfy Assumption~\ref{ass:model} with $p\colon\N\to\N$, $h\colon\N\to\N_0$, and transformed spike-time maps $f^{(\ell)}_i$, $\ell\in[L]$, $i\in[N_\ell]$. Suppose, in addition, that for every $\ell\in[L]$, $i\in[N_\ell]$, nonempty $J\subseteq[N_{\ell-1}]$, $q\in[|J|]$, and weak order $G=(G_1,\dots,G_q)$ of J, we have:
    \begin{enumerate}[(i)]
        \item \label{ass:selection_hypothesis} for every $y\in Q_G$, that
        \begin{align*}
            f_{i,G}^{(\ell)}(y)\in\{y_j\colon j\in J\}\cup\{\infty\} 
        \end{align*}
        \item the partition and affine test functions from Assumption~\ref{ass:model} also satisfy $|\mathcal{A}|\leq p(q)$ and $H\leq h(q)$.
    \end{enumerate}
    Then 
    \begin{align*}
        \mathfrak{C}_\diamond(\Phi) \leq (\sum_{q\in[N_0]}q!{N_0\brace q}) \prod_{\ell=1}^L \min\left\{p(d_{\ell-1}+1)^{N_\ell},\ \sum_{j=0}^{d_{\ell-1}}2^j{N_\ell h(d_{\ell-1}+1)\choose j}\right\},
    \end{align*}
    where $d_\ell:=\min_{0\leq r\leq\ell} (N_r-1)$, $\ell\in\{0,\dots,L\}$, and ${N_0\brace q}$ denotes Stirling numbers of the second kind.
\end{corollary}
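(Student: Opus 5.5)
The plan is to rerun the induction from the proof of \cref{thm:hyper_general}, with a stronger invariant that exploits the selection hypothesis (i). The target invariant concerns the weak order of the input. Fix a weak order $G^{(0)}=(G_1,\dots,G_q)$ of $[N_0]$ and restrict to the cone $Q_{G^{(0)}}\subseteq D^{N_0}$. There are exactly $\sum_{q\in[N_0]}q!{N_0\brace q}$ such domains, since a weak order is an ordered partition into $q$ nonempty blocks. They partition $D^{N_0}$, and this accounts for the prefactor.

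On each such domain I will show that every finite transformed spike time $x^{(r)}_i(z)$ is one of the input coordinates $z_j$. For $r=0$ this is trivial. If it holds for $r-1$, it holds for $r$ by hypothesis (i), because $f^{(r)}_{i,G}(y)\in\{y_j\}\cup\{\infty\}$.

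I will then build partitions $\cP_r$ of $Q_{G^{(0)}}$ that satisfy properties (a) and (b) of the proof of \cref{thm:hyper_general}, together with one more property:
\begin{enumerate}[(a)]
\setcounter{enumi}{3}
\item on each $P\in\cP_r$ and for each $i\in J_r(P)$, there is a fixed index $\pi(i)\in[N_0]$ with $x^{(r)}_i(z)=z_{\pi(i)}$.
\end{enumerate}
Property (d) yields two consequences. First, the weak order of $x^{(r-1)}_J(z)$ is already determined on $P$. It is the pullback of $G^{(0)}$ under $\pi$, so the weak-order factor $\sum_j 2^j\binom{\binom{N_{\ell-1}}{2}}{j}$ in \cref{thm:hyper_general} collapses to $1$. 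Second, the presynaptic spike times in layer $\ell-1$ take at most $q_{\ell-1}$ distinct values. This number is bounded by the number of distinct values among the coordinates of $x^{(r)}$, which is at most $\min\{q,|J_{r}|\}$ for every $r<\ell$. I also want $q_{\ell-1}\le d_{\ell-1}+1$, and this requires more care; see the last paragraph.

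For the layer step I fix $P$ and the induced weak order $G$ of $J$, which has at most $d_{\ell-1}+1$ blocks. Hypothesis (ii) gives $|\mathcal A_i|\le p(d_{\ell-1}+1)$ and $H_i\le h(d_{\ell-1}+1)$, using monotonicity of $p$ and $h$. The two refinement options then proceed as before. The product refinement gives at most $p(d_{\ell-1}+1)^{N_\ell}$ pieces. The sign-pattern refinement uses \cref{lem:sign_counting} on the image $Y_P$, which after the (T)/(H) normalisation has affine dimension at most $d_{\ell-1}$, and gives at most $\sum_{j\le d_{\ell-1}}2^j\binom{N_\ell h(d_{\ell-1}+1)}{j}$ pieces. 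On each new piece, the output $f^{(\ell)}_{i,G}$ is affine on $A_i$ and takes values in the finitely many coordinate functions $y_j$. Since the pieces are convex and the agreement sets $\{f=y_j\}$ are closed in $A_i$, a connectedness argument shows that $f$ coincides with a single $y_j$ on the whole piece. (Alternatively, an affine map equal to one of finitely many affine maps on a convex set equals one of them identically, because the agreement sets are affine subsets covering a convex set.) This gives property (d) at level $\ell$. Properties (a)–(c) are verified exactly as in \cref{thm:hyper_general}. Multiplying over layers and summing over $G^{(0)}$ then gives the claim.

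The main obstacle is showing that $G$ has at most $d_{\ell-1}+1$ blocks, rather than merely at most $\min(q,N_{\ell-1})$. Because of property (d), the distinct values of $x^{(\ell-1)}_J$ are distinct input coordinates. These form at most $|J|\le N_{\ell-1}$ blocks. By induction, the values are also drawn from the at most $d_{\ell-2}+1$ distinct values present in layer $\ell-2$, because every output equals some presynaptic value. Hence the number of blocks is at most $\min_{r<\ell}N_r=d_{\ell-1}+1$. I expect to make this "value set only shrinks" argument rigorous as part of property (d), by tracking the set $V_r(P)\subseteq[N_0]$ of realised indices with $|V_r|\le d_r+1$.
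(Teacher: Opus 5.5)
Your proposal is correct and follows the paper's proof essentially step by step: the starting partition into input weak-order domains, the invariant that every finite coordinate equals a fixed input coordinate (so no weak-order subdivision is needed), the shrinking value set giving $q\le d_{\ell-1}+1$, and the same two refinement options with the normalisation argument. The only local difference is how you show that the selected coordinate is fixed on each $A\in\mathcal A$. The paper uses constancy of the causal set, since the sets $K_{k-1}$ (resp.\ $K_k$) differ for different $k$; you instead use affineness on a convex set, and your affine-subspace covering argument is complete as stated (the bare connectedness version would additionally need that agreement sets for different blocks are disjoint on $Q_G$).
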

\end{highlightbox}
\begin{proof}
    Fix $\ell\in[L]$, $i\in[N_\ell]$, a nonempty $J\subseteq[N_{\ell-1}]$, and a weak order $G=(G_1,\ldots,G_q)$ of $J$. Let $\mathcal A$ be the partition provided by Assumption 5.5, and let $A\in\mathcal A$ be a set on which $f_{i,G}^{(\ell)}$ is finite. Put $K_0:=\varnothing$ and $K_k:=G_1\cup\cdots\cup G_k$ for $k\in[q]$. By condition (i) of the corollary, for every $y\in A$, the output equals the common coordinate of some block $G_k$, and hence
    \begin{align*}
    \mathcal C_{i,G,\diamond}^{(\ell)}(y)
    =
    \begin{cases}
    K_{k-1},&\diamond={<},\\
    K_k,&\diamond={\le}.
    \end{cases}
    \end{align*}
    For either fixed convention, these sets are distinct as $k$ varies. Since the causal-set map is constant on $A$ by \Cref{ass:model}(i), the selected index $k$ is independent of $y\in A$. Choosing any $j\in G_k$, we obtain, for every $y\in A$, that
    \begin{align}\label{eq:fycor}
        f_{i,G}^{(\ell)}(y)=y_j.
    \end{align}

    We now construct partitions $\mathcal{P}_0,\dots,\mathcal{P}_L$ of the transformed domain $D^{N_0}$. We retain properties \eqref{ind_ass_a}-\eqref{ind_ass_c} from the proof of \Cref{thm:hyper_general}, and additionally require that, for every $r\in\{0,\dots,L\}$, $P\in \mathcal{P}_r$, and coordinate $i\in[N_r]$ for which $x_i^{(r)}$ is finite on $P$, there exists $j\in[N_0]$ such that, for every $z\in P$, 
    \begin{align}\label{eq:ind_ass_d}
        x_i^{(r)}(z)=z_j.
    \end{align}
    We start with
    \begin{align*}
        \mathcal{P}_0:=\{Q_G\colon G \text{ is a weak order of }[N_0]\}.
    \end{align*}
    Properties \eqref{ind_ass_a}-\eqref{ind_ass_c} and \eqref{eq:ind_ass_d} hold since $x^{(0)}$ is the identity and each $Q_G$ is polyhedral and invariant under translation or positive scaling respectively. There are ${N_0 \brace k}$ partitions of $[N_0]$ into $k$ nonempty sets, each of which admits $k!$ different orderings. Hence
    \begin{align*}
        |\mathcal{P}_0|=\sum_{k\in[N_0]} k! {N_0 \brace k}.
    \end{align*}
    For the induction step, let $\ell\in[L]$ and $P\in\mathcal{P}_{\ell-1}$. Denote by $J\subseteq[N_{\ell-1}]$ the set of coordinates of $x^{(\ell-1)}$ that are finite on $P$. If $J=\emptyset$, the causality assumption \eqref{eq:nfbfi} implies that all outputs in layer $\ell$ are infinite and we can retain $P$ without subdivision.

    Suppose $J\neq\emptyset$. By \eqref{eq:ind_ass_d}, each coordinate of $x_J^{(\ell-1)}$ equals a fixed transformed input coordinate on $P$. 
    Since the partitions are constructed by iterative refinement of the starting partition, $P$ lies in one set of $\mathcal{P}_0$, where the weak order of these input coordinates is fixed. It follows that $x_J^{(\ell-1)}$ has one weak order $G=(G_1,\dots,G_q)$ throughout $P$.
    Hence no subdivision according to weak orders is needed here.

    Moreover, every finite output equals an incoming coordinate by assumption \eqref{ass:selection_hypothesis}. Therefore, for each input $z$, the set of finite values in each layer is contained in the set of finite values in the preceding layer. The $q$ distinct values entering layer $\ell$ must consequently occur in every preceding layer, which implies
    \begin{align*}
        q\leq \min_{0\leq r\leq\ell-1}N_r=d_{\ell-1} + 1.
    \end{align*}

    For each neuron $i\in[N_\ell]$, choose a partition $\mathcal{A}_i$ of $Q_G$ into at most $p(q)$ sets and affine functions $g_{i,1},\dots, g_{i,H_i}$ with $H_i\leq h(q)$, satisfying Assumption~\ref{ass:model}(i)-(iii). 

    As in the proof of \Cref{thm:hyper_general}, we can refine $P$ in either of two ways. Taking, for all $(A_1,\dots,A_{N_\ell})\in\prod_{i\in[N_\ell]}\mathcal{A}_i$, the nonempty sets
    \begin{align*}
        \{z\in P\colon x_J^{(\ell-1)}(z)\in A_i\ \forall i\in[N_\ell]\}
    \end{align*}
    yields at most $p(q)^{N_\ell}$ many sets. Alternatively, we can partition $P$ according to the signs of all
    \begin{align*}
        g_{i,k}\circ x_J^{(\ell-1)},
    \end{align*}
    with $i\in[N_\ell]$, $k\in[H_i]$.
    Due to properties \eqref{ind_ass_b}-\eqref{ind_ass_c}, the normalisation argument in the proof of \Cref{thm:hyper_general} ensures that the same joint sign patterns are realised on a set of affine dimension at most $d_{\ell-1}$. \Cref{lem:sign_counting} therefore bounds the number of sets of this refinement by
    \begin{align*}
        \sum_{j=0}^{d_{\ell-1}}2^j{N_\ell h(q)\choose j}.
    \end{align*}
    For either refinement, every resulting set $P'$ satisfies
    \begin{align*}
        x_J^{(\ell-1)}(P')\subseteq A_i
    \end{align*}
    for some $A_i\in\mathcal{A}_i$, $i\in[N_\ell]$.
    This follows directly from the first construction and from Assumption~\ref{ass:model}(ii) for the second option. The respective argument  
    in the proof of \Cref{thm:hyper_general} therefore preserves properties \eqref{ind_ass_a}-\eqref{ind_ass_c}. By \eqref{eq:fycor}, each finite output on $P'$ also equals one fixed incoming coordinate. Applying \eqref{eq:ind_ass_d} shows that this coordinate must equal a fixed transformed input coordinate $z_j$. Thus \eqref{eq:ind_ass_d} is preserved by the induction step.

    Choosing the refinement option with the smaller bound and using $q\leq d_{\ell-1}+1$ as well as monotonicity of $p$ and $h$, we subdivide 
    $P$ into at most
    \begin{align*}
        \min\{p(d_{\ell-1}+1)^{N_\ell}, \sum_{j=0}^{d_{\ell-1}}2^j{N_\ell h(d_{\ell-1}+1)\choose j}\}
    \end{align*}
    sets. Multiplying these factors over the layers and by $|\mathcal{P}_0|$ yields the claimed bound, where we note that property \eqref{ind_ass_a} ensures that the preimage $\rho^{-1}(P)$ of every set $P\in\mathcal{P}_L$ is contained in one causal piece.  
\end{proof}

\section{Hyperplane bounds for specific neuron models satisfying \cref{ass:model}}

In this section we consider networks without delays, i.e.\@ each layer simply receives the spike times of the previous layers as input.

\subsection{Upper bound for nLIF with exponential synapses}

\begin{highlightbox}
\begin{theorem}\label{thm:nLIF_hyperplane}
    Let $\Phi$ be a single-spike neural network using nLIF neurons with exponential synapses according to \Cref{def:nLIF}, with depth $L$, layer dimensions $N_0,\dots,N_L$, and common time constant. 
    Then
    \begin{align*}
        \mathfrak{C}_<(\Phi) \leq \prod_{\ell\in[L]} \sum_{j=0}^{d_{\ell-1}} 2^j{{N_{\ell-1}\choose 2}\choose j}\min\{N_{\ell-1}^{N_\ell},\sum_{j=0}^{d_{\ell-1}}2^j{N_\ell (N_{\ell-1}-1)\choose j}\},
    \end{align*}
    where $d_\ell:=\min_{0\leq r\leq\ell} (N_r-1)$, $\ell\in\{0,\dots,L\}$. 
\end{theorem}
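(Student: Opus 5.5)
We apply \cref{thm:hyper_general} with the exponential coordinate map $\rho(u)=e^{u/\tau}$, so $D=(0,\infty)$, and with $p(n)=n$ and $h(n)=n-1$. Substituting these into the bound of \cref{thm:hyper_general} gives exactly the claimed expression. It remains to verify \cref{ass:model} with symmetry condition (H) and $s_H=1$.

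\textbf{Transformed map and symmetry.} For a causal set $\mathcal C$ with $\sum_{j\in\mathcal C}w_j>\vartheta$, \cref{def:nLIF} gives, in the coordinates $y_j=e^{s_j/\tau}$,
\begin{align*}
f(y)=\frac{\sum_{j\in\mathcal C}w_jy_j}{\sum_{j\in\mathcal C}w_j-\vartheta}.
\end{align*}
This is linear in $y$, hence homogeneous, which yields the scaling part of (H). Condition \eqref{eq:nfbfi} holds because a spike can only occur after at least one input has arrived. In the original time coordinates this means $t^*>\min_j s_j$, and since $\rho$ is increasing the same inequality holds for the transformed times.

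\textbf{Test functions.} Fix $J$ and a weak order $G=(G_1,\dots,G_q)$ of $J$. Write $K_k=G_1\cup\dots\cup G_k$ and let $\bar y_k$ denote the common value of $y$ on $G_k$. Between two consecutive arrival times the potential has the form $a-b\,e^{-t/\tau}$, so it is monotone there. Hence the neuron crosses threshold in the $k$-th interval with causal set $K_k$ if and only if two things hold: it has not fired in an earlier interval, and $\sum_{K_k}w>\vartheta$ together with $f_{K_k}(y)<\bar y_{k+1}$ (for $k=q$ the second condition is vacuous). When $\sum_{K_k}w>\vartheta$, the inequality $f_{K_k}(y)<\bar y_{k+1}$ is equivalent to
\begin{align*}
g_k(y):=\sum_{j\in K_k}w_jy_j-\Big(\sum_{j\in K_k}w_j-\vartheta\Big)\bar y_{k+1}<0 .
\end{align*}
We take one such linear function for each $k\in[q-1]$ with $\sum_{K_k}w>\vartheta$, which gives at most $q-1\le|J|-1=h(|J|)$ functions.

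\textbf{Separation and local affinity.} The outcome is determined by the sign pattern of the $g_k$. The neuron fires with causal set $K_k$, where $k$ is the first admissible index with $g_k<0$. If no such index exists, it fires with $K_q$ when $\sum_{K_q}w>\vartheta$, and never fires otherwise. The tie $g_k=0$ means that the output coincides with the spike arrival at $\bar y_{k+1}$. Under the exclusive convention that input is not causal and the crossing happens exactly then, so the case $\sigma_k=0$ must be grouped with the side that yields causal set $K_k$. Checking these boundary cases carefully is where condition (ii) needs the most attention. On each region the causal set is constant and $f_{i,G}$ is either $\infty$ or one of the linear maps $f_{K_k}$, which gives condition (i).

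\textbf{Counting the regions.} The main obstacle is showing $p(|J|)=|J|$ rather than $q+1$. The outcomes are the causal sets $K_1,\dots,K_q$ and "no spike", which is $q+1$ options. However, "no spike" and "causal set $K_q$" are mutually exclusive for fixed weights, independently of $y$. If $\sum_{K_q}w>\vartheta$, the potential tends to a value above threshold, so the neuron must eventually fire and "no spike" never occurs. If $\sum_{K_q}w\le\vartheta$, then $K_q$ can never be a causal set. Therefore at most $q\le|J|$ regions are nonempty. These regions are polyhedral, being cut out of $Q_G$ by the sign conditions on the linear $g_k$, and they partition $Q_G$.

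Monotonicity of $p$ and $h$ is clear, so \cref{thm:hyper_general} applies and yields the stated bound.
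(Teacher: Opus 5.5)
Your proposal is correct and follows essentially the same route as the paper: the coordinate map $\rho(t)=e^{t/\tau}$, $p(n)=n$, $h(n)=n-1$, and symmetry condition (H) are the same, and the count of at most $q$ regions uses the same mutual exclusivity of ``no spike'' and causal set $K_q$. Your test function $g_k$ agrees on $Q_G$ with the negative of the paper's $g_{k+1}$ (the rescaled transformed potential minus threshold at the $(k+1)$-st arrival), except that the paper uses all indices rather than only admissible ones, since its $g_1<0$ identically. The one step you leave implicit is that $\lambda y$ lies in the same region as $y$, which is needed for $f_{i,G}^{(\ell)}(\lambda y)=\lambda f_{i,G}^{(\ell)}(y)$; it follows at once from the linearity of your $g_k$, and the paper states it explicitly.
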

\end{highlightbox}
\begin{proof}
    We verify Assumption~\ref{ass:model} with $\rho(t)=e^{\frac{t}{\tau}}$, $p(n)=n$, and $h(n)=n-1$.
    For every $\ell\in[L]$ and $i\in[N_\ell]$, choose $f_i^{(\ell)}$ by applying the first-threshold-hit rule of \Cref{def:nLIF} to the incoming spike-time vector and expressing the resulting output in transformed coordinates. These maps satisfy \eqref{eq:nfbfi}.

    Fix $\ell\in[L]$, $i\in[N_\ell]$, a nonempty set $J\subseteq[N_{\ell-1}]$ and a weak order $G=(G_1,\dots,G_q)$ of $J$.
    We construct a partition $\mathcal{A}$ of $Q_G$ and affine functions $g_1,\dots,g_H\colon\R^J\to \R$, for which we then verify properties (i)-(iii). To this end, we choose, for every $k\in[q]$ one $j_k\in G_k$ and define the projection
    \begin{align*}
        \eta_k\colon\R^J&\to\R,\\
        y&\mapsto y_{j_k}.
    \end{align*}
    We set $\eta_{q+1}(y):=\infty$ and note that on $Q_G$ we have $0<\eta_1(y)<\dots<\eta_{q+1}(y)$ by definition of a weak order.
    In addition, we define, for $k\in[q]$,
    \begin{align*}
        K_k:=\bigcup_{r\in[k]} G_r,\quad W_k:=\sum_{j\in K_k}W_{ij}^{(\ell)}.
    \end{align*}
    We have, for every $y\in Q_G$, $k\in[q]$, and $z\in[\eta_k(y),\eta_{k+1}(y))$, that
    \begin{align}\label{eq:trans_potential}
        z(u_i^{(\ell)}(\tau\log z)-\vartheta)=(W_k-\vartheta)z - \sum_{j\in K_k}W_{ij}^{(\ell)}y_j. 
    \end{align}
    Note that $t = \tau\log z$ is the argument of $u_i^{(\ell)}$ here. For a given $y\in Q_G$ the corresponding incoming spike times in untransformed coordinates are $t_j^{(\ell-1)}=\tau\log y_j$, $j\in J$.
    At time $t=\tau\log z$ with $z\in[\eta_k(y),\eta_{k+1}(y))$ we have
    \begin{align*}
        e^{-\frac{t-t_j^{(\ell-1)}}{\tau}}=\frac{y_j}{z}.
    \end{align*}
    Thus, in transformed coordinates the condition of reaching the threshold $\vartheta$ is given, for $z\in[\eta_k(y),\eta_{k+1}(y))$, by the affine equation
    \begin{align*}
        (W_k-\vartheta)z-\sum_{j\in K_k}W_{ij}^{(\ell)}y_j=0.
    \end{align*}
    We define, for $k\in[q]$, the linear functions
    \begin{align*}
        g_k\colon\R^J&\to\R\\
        y&\mapsto (W_k-\vartheta)\eta_k(y)-\sum_{j\in K_k}W_{ij}^{(\ell)}y_j.
    \end{align*}
    Moreover, for $k\in[q-1]$, we set
    \begin{align*}
        A_k:=\{y\in Q_G\colon g_r(y)<0 \ \forall r\in[k], g_{k+1}(y)\geq0 \}
    \end{align*}
    as well as $A_q:=\{y\in Q_G\colon g_r(y)<0 \ \forall r\in[q]\}$. Since $g_1(y)=-\vartheta\eta_1(y)<0$ the nonempty sets amongst $A_1,\dots,A_q$ form a polyhedral partition $\mathcal{A}$ of $Q_G$. 
    
    For $k<q$, equation \eqref{eq:trans_potential} also holds at $z=\eta_{k+1}(y)$, because the responses of spikes arriving at that time are zero. Its values at the two endpoints are $g_k(y)$ and $g_{k+1}(y)$. The potential is zero before the first arrival, and negative endpoint values exclude a threshold hit between consecutive arrivals, since the expression in \eqref{eq:trans_potential} is affine in $z$. Consequently, for every $k<q$ and $y\in A_k$,
    \begin{align*}
    \eta_k(y)<f_{i,G}^{(\ell)}(y)\le\eta_{k+1}(y).
    \end{align*}
    The exclusive causal set is therefore $K_k$. Moreover, if $A_k\ne\varnothing$, the endpoint inequalities force the slope in \eqref{eq:trans_potential} to be positive, so $W_k>\vartheta$.

    On $A_q$, the potential remains below threshold through the last arrival. Equation \eqref{eq:trans_potential} therefore implies that the output is finite precisely when $W_q>\vartheta$. In that case its exclusive causal set is $K_q$; otherwise, the output is infinite throughout $A_q$ and its causal set is empty.
    Thus, on every nonempty $A_k$ with finite output, we have $W_k>\vartheta$ and, for every $y\in A_k$, that
    \begin{align*}
        f_{i,G}^{(\ell)}(y)
        =
        \frac{\sum_{j\in K_k}W_{ij}^{(\ell)}y_j}
             {W_k-\vartheta}.
    \end{align*}
    Every finite restriction is therefore linear, and finite output status and the exclusive causal set are constant on each set in $\mathcal A$. Since
    \begin{align*}
        |\mathcal A|\le q\le |J|,
    \end{align*}
    \Cref{ass:model}(i) holds with $p(n)=n$.
    For \Cref{ass:model}(ii), the signs of $g_2,\ldots,g_q$ determine membership in $\mathcal A$, since $g_1$ is always negative on $Q_G$. Hence $q-1\le |J|-1$ test functions suffice, giving $h(n)=n-1$.
        
    In order to verify Assumption~\ref{ass:model}(iii)(H), we fix $\lambda\in(0,\infty)$. Note that, for every $y\in Q_G$ we have $\lambda y\in Q_G$ and $g_r(\lambda y)=\lambda g_r(y)$ due to linearity of $g_r$, $r\in[q]$. As their signs determine the partition we also have $y\in A_k\implies \lambda y\in A_k$. 
    On every nonempty $A_k$ with finite output, the displayed formula is linear in $y$. Since positive scaling preserves $A_k$, it follows, for every $y\in A_k$, that
    \begin{align*}
    f_{i,G}^{(\ell)}(\lambda y)
    =
    \lambda f_{i,G}^{(\ell)}(y).
    \end{align*}

    Having ensured that Assumption~\ref{ass:model} holds for this model, we can apply \Cref{thm:hyper_general} to obtain the claimed bound.
\end{proof}

\subsection{Upper bound for ReLU-type response model}

\begin{highlightbox}
\begin{theorem}\label{thm:ReLU_hyperplane}
    Let $\Phi$ be a single-spike neural network using ReLU-type response neurons according to \Cref{def:relu_response} with depth $L$ and layer dimensions $N_0,\dots,N_L$. 
    Then
    \begin{align*}
        \mathfrak{C}_{<}(\Phi) \leq \prod_{\ell\in[L]} \sum_{j=0}^{d_{\ell-1}} 2^j{{N_{\ell-1}\choose 2}\choose j}\min\{N_{\ell-1}^{N_\ell},\sum_{j=0}^{d_{\ell-1}}2^j{N_\ell (N_{\ell-1}-1)\choose j}\},
    \end{align*}
    where $d_\ell:=\min_{0\leq r\leq\ell} (N_r-1)$, $\ell\in\{0,\dots,L\}$. 
\end{theorem}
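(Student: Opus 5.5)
We mirror the proof of \Cref{thm:nLIF_hyperplane} and verify \Cref{ass:model} with the identity coordinate map $\rho(t)=t$, $D=\R$, symmetry condition (T), $p(n)=n$ and $h(n)=n-1$. Then \Cref{thm:hyper_general} gives the claimed bound directly.

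\textbf{Transformed maps and the causality condition.} For each neuron we take $f_i^{(\ell)}$ to be the first-threshold-hit rule of \Cref{def:relu_response}, with $\rho$ the identity. Since $u\equiv 0<\vartheta$ before the earliest finite input, \eqref{eq:nfbfi} holds.

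\textbf{Local partition.} Fix $\ell$, $i$, a nonempty $J$ and a weak order $G=(G_1,\dots,G_q)$. Choose representatives $j_k\in G_k$, write $\eta_k(y)=y_{j_k}$ with $\eta_{q+1}:=\infty$, and set $K_k:=G_1\cup\dots\cup G_k$ and $W_k:=\sum_{j\in K_k}W^{(\ell)}_{ij}$.

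For $t\in[\eta_k(y),\eta_{k+1}(y)]$ the potential is affine in $t$:
\begin{align*}
u(t)-\vartheta=W_k t-\sum_{j\in K_k}W^{(\ell)}_{ij}y_j-\vartheta .
\end{align*}
This also holds at the right endpoint, because ReLU responses start at zero. Define the affine test functions
\begin{align*}
g_k(y):=W_k\eta_k(y)-\sum_{j\in K_k}W^{(\ell)}_{ij}y_j-\vartheta,\qquad k\in[q].
\end{align*}
These are the values of $u-\vartheta$ at the arrival times. Note that $g_1(y)=W_1\eta_1-W_1\eta_1-\vartheta=-\vartheta<0$ identically.

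Define $A_k$ exactly as in the nLIF proof:
\begin{align*}
A_k&:=\{y\in Q_G\colon g_r(y)<0\ \forall r\le k,\ g_{k+1}(y)\ge 0\},\qquad k\in[q-1],\\
A_q&:=\{y\in Q_G\colon g_r(y)<0\ \forall r\le q\}.
\end{align*}

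\textbf{Verifying (i).} Since $u-\vartheta$ is affine on each inter-arrival interval and negative at its left endpoint, on $A_k$ with $k<q$ the first threshold hit lies in $(\eta_k,\eta_{k+1}]$. The exclusive causal set is therefore $K_k$, and $W_k>0$. The output is
\begin{align*}
f_{i,G}^{(\ell)}(y)=\frac{\vartheta+\sum_{j\in K_k}W^{(\ell)}_{ij}y_j}{W_k},
\end{align*}
which is affine in $y$.

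On $A_q$ the output is finite exactly when $W_q>0$, with causal set $K_q$ and the same formula. Otherwise the output is infinite and the causal set is empty, and this status is constant on $A_q$.

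Hence $|\mathcal A|\le q\le|J|$, which gives $p(n)=n$.

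\textbf{Verifying (ii).} Because $g_1<0$ always, only $g_2,\dots,g_q$ are needed. This gives $H\le|J|-1$ and hence $h(n)=n-1$.

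\textbf{Verifying (iii)(T).} Each $g_k$ is translation invariant. Indeed, $g_k(y+c\mathbf 1)-g_k(y)=W_kc-W_kc=0$. Since the sets $A_k$ are defined by signs of the $g_k$, they are invariant under common shifts. The formula for the output then gives $f(y+c\mathbf 1)=f(y)+c\,W_k/W_k=f(y)+c$.

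\textbf{Main obstacle.} The delicate point is the boundary case $g_{k+1}=0$, where the hit occurs exactly at $\eta_{k+1}$. We must confirm that the exclusive causal set is still $K_k$, which holds because $y_j<t^*$ is strict for $j\in G_{k+1}$. We must also rule out a hit inside an interval when both endpoint values are negative. Affinity of $u-\vartheta$ on the closed interval handles both points.

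Everything else is routine and follows the nLIF argument. Applying \Cref{thm:hyper_general} with $p(N_{\ell-1})=N_{\ell-1}$ and $h(N_{\ell-1})=N_{\ell-1}-1$ yields the statement.
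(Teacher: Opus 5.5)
Your proposal is correct and follows essentially the same route as the paper. Both verify \Cref{ass:model} with $\rho(t)=t$, symmetry (T), $p(n)=n$ and $h(n)=n-1$, using the same test functions $g_k$ (with $g_1\equiv-\vartheta$ on $Q_G$), the same sets $A_k$, the same affine output formula and the same translation-invariance check, and then apply \Cref{thm:hyper_general}. Your explicit treatment of the boundary case $g_{k+1}=0$ for the exclusive causal set is a useful detail that the paper handles only implicitly.
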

\end{highlightbox}
\begin{proof}
    We verify Assumption~\ref{ass:model} with $\rho(t)=t$, $p(n)=n$, and $h(n)=n-1$. For $\ell\in[L]$, $i\in[N_\ell]$, $y\in(\R\cup\{\infty\})^{N_{\ell-1}}$, we define
    \begin{align*}
        f_i^{(\ell)}(y):=\inf\{t\in\R\colon \sum_{j\in[N_{\ell-1}], y_j<\infty} W_{ij}^{(\ell)}\max\{0,t-y_j\}=\vartheta\},
    \end{align*}
    where we use the convention $\inf\emptyset = \infty$. The potential is zero before the earliest finite incoming spike, and identically zero if all incoming times are infinite. Thus the causality condition \eqref{eq:nfbfi} holds.
    Fix $\ell\in[L]$, $i\in[N_\ell]$, a nonempty set $J\subseteq[N_{\ell-1}]$ and a weak order $G=(G_1,\dots,G_q)$ of $J$.
    We construct a partition $\mathcal{A}$ of $Q_G$ and affine functions $g_1,\dots,g_H\colon\R^J\to \R$, for which we then verify properties (i)-(iii). To this end, we choose, for every $k\in[q]$ one $j_k\in G_k$ and define the projection
    \begin{align*}
        \eta_k\colon\R^J&\to\R,\\
        y&\mapsto y_{j_k}.
    \end{align*}
    We set $\eta_{q+1}(y):=\infty$. In addition, we define, for $k\in[q]$,
    \begin{align*}
        K_k:=\bigcup_{r\in[k]} G_r,\quad W_k:=\sum_{j\in K_k}W_{ij}^{(\ell)}.
    \end{align*}
    Note that for $k\in[q]$, $y\in Q_G$, $t\in[\eta_k(y),\eta_{k+1}(y))$, we have
    \begin{align}\label{eq:ramp_aff_eq}
        \sum_{j\in J}W_{ij}^{(\ell)}\max\{0,t-y_j\}-\vartheta=W_k t-\sum_{j\in K_k}W_{ij}^{(\ell)}y_j-\vartheta, 
    \end{align}
    which is an affine function of $t$. Consequently we define for $k\in[q]$, the affine function
    \begin{align}\begin{split}\label{eq:ramp_aff_fun}
        g_k\colon \R^J&\to\R\\
        y&\mapsto W_k\eta_k(y)-\sum_{j\in K_k} W_{ij}^{(\ell)}y_j-\vartheta.
    \end{split}\end{align}
    On $Q_G$, its value is the potential minus threshold at the $k$-th arrival.
    We define, for every $k\in[q-1]$, the sets
    \begin{align*}
    A_k:=
    \{y\in Q_G:g_r(y)<0\ \forall r\in[k],
                      \ g_{k+1}(y)\ge0\}
    \end{align*}
    and
    \begin{align*}
    A_q:=
    \{y\in Q_G:g_r(y)<0\ \forall r\in[q]\}.
    \end{align*}
    Since $g_1(y)=-\vartheta<0$, the nonempty sets among $A_1,\ldots,A_q$ form a polyhedral partition $\mathcal A$ of $Q_G$.
    For $k<q$, equation \eqref{eq:ramp_aff_eq} also holds at $t=\eta_{k+1}(y)$, because newly arriving responses vanish there. 
    The argument in the proof of \Cref{thm:nLIF_hyperplane} that locates the first threshold hit from the signs of $g_1(y),\dots,g_{k+1}(y)$ applies equally to the affine expression in \eqref{eq:ramp_aff_eq}. Thus, for every $y\in A_k$,
    \begin{align*}    
        \eta_k(y)<f_{i,G}^{(\ell)}(y)\le\eta_{k+1}(y).
    \end{align*} 
    Hence the exclusive causal set is $K_k$, and $A_k\ne\varnothing$ implies $W_k>0$.
    On $A_q$, equation \eqref{eq:ramp_aff_eq} shows that the output is finite precisely when $W_q>0$. Its exclusive causal set is then $K_q$. Otherwise, the output is infinite throughout $A_q$ and its causal set is empty. On every nonempty $A_k$ with finite output, we consequently have, for every $y\in A_k$, that
    \begin{align*}
    f_{i,G}^{(\ell)}(y)
    =
    \frac{\vartheta+\sum_{j\in K_k}W_{ij}^{(\ell)}y_j}
         {W_k}.
    \end{align*}
    Thus every finite restriction is affine, and finite output status and the exclusive causal set are constant on each set in $\mathcal A$. Since $|\mathcal A|\le q\le |J|$, \Cref{ass:model}(i) holds with $p(n)=n$.
    For \Cref{ass:model}(ii), the signs of $g_2,\ldots,g_q$ determine membership in $\mathcal A$, since $g_1$ is always negative. Hence $h(n)=n-1$ suffices.
    
    To verify Assumption~\ref{ass:model}(iii)(T), we fix $c\in\R$. Due to \eqref{eq:ramp_aff_fun}, we have $g_k(y+c\mathbf{1})=g_k(y)$ for all $k\in[q]$, $y\in\R^J$. Moreover, translation by $c\mathbf{1}_J$ preserves $Q_G$, and since test signs determine membership in $\mathcal{A}$, it also preserves each set in $\mathcal{A}$. Thus we have $f_{i,G}^{(\ell)}(y+c\mathbf{1}_J)=f_{i,G}^{(\ell)}(y)+c$ for every $y\in Q_G$ with finite output. 
    Having verified Assumption~\ref{ass:model}, application of \Cref{thm:hyper_general} yields the claimed bound. 
\end{proof}

\begin{remark}
    For inclusive causal pieces, the bounds in \Cref{thm:nLIF_hyperplane} and \Cref{thm:ReLU_hyperplane} hold with $N_{\ell-1}^{N_\ell}$ replaced by $(N_{\ell-1}+1)^{N_\ell}$. Specifically, for each fixed weak order, partition according to $\eta_k\le f_{i,G}^{(\ell)}<\eta_{k+1}$, $k\in[q]$, and $f_{i,G}^{(\ell)}=\infty$, where $\eta_{q+1}:=\infty$. The same test functions determine these polyhedral sets. On a finite-output set, the corresponding affine formula applies when its denominator is positive. Otherwise, a first threshold hit can occur only at the left endpoint, implying $f_{i,G}^{(\ell)}=\eta_k$. Thus \Cref{ass:model} holds with $p(n)=n+1$ and $h(n)=n-1$.
\end{remark}

\subsection{Upper bound LIF with Dirac-delta synapses}
\begin{highlightbox}
\begin{theorem}\label{thm:LIF}
    Let $\diamond\in\{<,\leq\}$ and let $\Phi$ be a single-spike neural network using LIF neurons with Dirac-$\delta$ synapses according to \Cref{def:dirac} with depth $L$, layer dimensions $N_0,\dots,N_L$, and common time constant. 
    Then
    \begin{align*}
        \mathfrak{C}_{\diamond}(\Phi) \leq (\sum_{q\in[N_0]}q!{N_0\brace q}) \prod_{\ell=1}^L \min\left\{(d_{\ell-1}+1)^{N_\ell},\ \sum_{j=0}^{d_{\ell-1}}2^j{N_\ell d_{\ell-1}\choose j}\right\},
    \end{align*}
    where $d_\ell:=\min_{0\leq r\leq\ell} (N_r-1)$, $\ell\in\{0,\dots,L\}$, and ${N_0\brace q}$ denotes Stirling numbers of the second kind.
\end{theorem}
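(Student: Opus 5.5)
The plan is to apply \Cref{cor:hyper_general} with $p(q)=q+1$ and $h(q)=q-1$. The target bound only requires $p(d_{\ell-1}+1)=d_{\ell-1}+1$ and $h(d_{\ell-1}+1)=d_{\ell-1}$, so the partition can have at most $q$ cells. I will therefore count the cells carefully and aim for $p(q)=q$: the cells are ``fires at block $k$'' for $k\in[q]$, plus ``never fires''. Keeping the cell count at $q$ is the point I expect to need care.

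\textbf{Coordinates and the partition.} I take the coordinate map $\rho(t)=e^{t/\tau}$ with $D=(0,\infty)$ and symmetry case (H). Fix $\ell$, $i$, $J$ and a weak order $G=(G_1,\dots,G_q)$. Define $K_k:=G_1\cup\dots\cup G_k$ and pick representatives $\eta_k(y)=y_{j_k}$ with $j_k\in G_k$. On $Q_G$, the potential immediately after the $k$-th arrival, multiplied by $\rho(t)$ with $t$ the arrival time, is
\begin{align*}
    \eta_k(y)\,u(\tau\log\eta_k(y))=\sum_{j\in K_k}W^{(\ell)}_{ij}y_j .
\end{align*}
Between arrivals the potential decays monotonically in absolute value, so its sign is preserved. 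Hence the threshold can only be crossed at an arrival, and $f_{i,G}^{(\ell)}(y)\in\{\eta_k(y)\}\cup\{\infty\}$, which is condition (i) of the corollary. Causality \eqref{eq:nfbfi} is immediate for the same reason. I then use the linear test functions
\begin{align*}
    g_k(y):=\sum_{j\in K_k}W^{(\ell)}_{ij}y_j-\vartheta\,\eta_k(y),\qquad k\in[q].
\end{align*}
The neuron fires at block $k$ if and only if $g_r<0$ for all $r<k$ and $g_k\ge 0$. This gives the cells $A_k=\{g_r<0\ \forall r<k,\ g_k\ge0\}$ together with $A_\infty=\{g_r<0\ \forall r\}$.

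\textbf{Reducing to $q$ cells and $q-1$ test functions.} As written there are $q+1$ cells, which would give $p(q)=q+1$ and a factor $(d+2)^{N_\ell}$ instead of the required $(d+1)^{N_\ell}$. To fix this, I use that $A_1$ is either all of $Q_G$ or empty, depending on the sign of $\sum_{j\in G_1}W_{ij}-\vartheta$, since $g_1$ is $\eta_1$ times a constant there.
\begin{itemize}
    \item If $A_1=Q_G$, the partition is the single cell $A_1$.
    \item Otherwise $g_1<0$ identically on $Q_G$, so only $A_2,\dots,A_q,A_\infty$ remain. These are at most $q$ cells, separated by the signs of $g_2,\dots,g_q$, so $h(q)=q-1$.
\end{itemize}
On each cell the causal set is constant: it is $K_{k-1}$ or $K_k$ depending on $\diamond$, and empty on $A_\infty$. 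The output is the linear projection $y\mapsto y_{j_k}$, which gives \Cref{ass:model}(i).

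\textbf{Remaining conditions.} For (H), the $g_k$ are linear, $Q_G$ and each $A_k$ are invariant under positive scaling, and $f_{i,G}^{(\ell)}(\lambda y)=\lambda y_{j_k}=\lambda f_{i,G}^{(\ell)}(y)$. The construction depends on $G$ only through $q$, so condition (ii) of the corollary holds with $p(q)=q$ and $h(q)=q-1$. Plugging these into \Cref{cor:hyper_general} yields exactly the stated bound.
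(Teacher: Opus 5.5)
Your proposal is correct and follows the paper's proof essentially step by step. Both use exponential coordinates with symmetry (H), the linear test functions $g_k(y)=\sum_{j\in K_k}W^{(\ell)}_{ij}y_j-\vartheta\eta_k(y)$ with cells $A_1,\dots,A_q,A_\infty$, and the observation that $g_1$ has constant sign on $Q_G$, which gives $p(q)=q$ and $h(q)=q-1$ in the corollary. Your opening mention of $p(q)=q+1$ is superseded by your own refinement to $p(q)=q$ and does no harm.
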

\end{highlightbox}
\begin{proof}
    We verify Assumption~\ref{ass:model} and the additional requirements of \Cref{cor:hyper_general} with $\tau\in(0,\infty)$, $\rho(t)=e^{\frac{t}{\tau}}$, $D=(0,\infty)$, $p(n)=n$, $h(n)=n-1$. 
    Fix $\ell\in[L]$ and $i\in[N_\ell]$. The potential is zero before the earliest incoming spike. Between arrivals, a potential below the positive threshold remains below it, since it decays towards zero. Thus every finite output spike occurs at an incoming spike time.
    For $y\in(D\cup\{\infty\})^{N_{\ell-1}}$, we have incoming times $t_j^{(\ell-1)}=\rho^{-1}(y_j)$. At finite incoming time $\tau\log y_k$, the potential is
    \begin{align*}
        \tfrac{1}{y_k}\sum_{j\in[N_{\ell-1}],y_j\leq y_k} W_{ij}^{(\ell)}y_j.
    \end{align*}
    The sum includes all simultaneous incoming spikes, as prescribed by \Cref{def:dirac}. The transformed output is therefore
    \begin{align}\label{eq:LIF_fi}
        f_i^{(\ell)}(y):=\min\{y_k\colon k\in[N_{\ell-1}], y_k<\infty, \sum_{j\in[N_{\ell-1}], y_j\leq y_k}W_{ij}^{(\ell)}y_j\geq\vartheta y_k\},
    \end{align}
    where we use the convention $\min\emptyset:=\infty$. Consequently $f_i^{(\ell)}(y)$ satisfies the causality condition \eqref{eq:nfbfi}. It moreover establishes the requirement that every finite output equals an incoming coordinate. 

    It remains to construct the partitions and test functions demanded by Assumption~\ref{ass:model}. Fix $J\subseteq[N_{\ell-1}]$, $q\in[|J|]$, and a weak order $G=(G_1,\dots,G_q)$ of $J$. For each $k\in[q]$, choose $j_k\in G_k$ and define $\eta_k\colon \R^J\to\R, y\mapsto y_{j_k}$, and $K_k:=\bigcup_{r\in[k]}G_r$. We also set $K_0:=\emptyset$. On $Q_G$ the values $\eta_1(y),\dots,\eta_q(y)$ are strictly increasing and $K_k$ consists precisely of the indices $j\in J$ with $y_j\leq \eta_k(y)$. Thus we can define, for $k\in[q]$, the test functions
    \begin{align*}
        g_k\colon \R^J&\to\R\\
        y&\mapsto \sum_{j\in K_k} W_{ij}^{(\ell)}y_j-\vartheta\eta_k(y),
    \end{align*}
    and note that, due to \eqref{eq:LIF_fi}, $f_{i,G}^{(\ell)}(y)$ evaluates to the first $\eta_k(y)$ for which $g_k(y)\geq 0$, or $\infty$ in case no such index exists.

    To check Assumption~\ref{ass:model}(i), we partition $Q_G$ according to that first index. Specifically, we define, for $k\in[q]$,
    \begin{align*}
        A_k:=\{y\in Q_G\colon g_r(y)<0\ \forall r<k, g_k(y)\geq 0\}
    \end{align*}
    and $A_\infty:=\{y\in Q_G\colon g_r(y)<0\ \forall r\in[q]\}$.
    The nonempty sets amongst these form a polyhedral partition $\mathcal{A}$ of $Q_G$. We have, for every $k\in[q]$ and $y\in A_k$, 
    \begin{align}\label{eq:LIF_fiC}
        f_{i,G}^{(\ell)}(y) =\eta_k(y), \quad \mathcal{C}_{i,G,\diamond}^{(\ell)}(y)=\begin{cases}
            K_{k-1}, &\diamond = <\\K_k, & \diamond =\leq
        \end{cases}. 
    \end{align}
    On $A_\infty$, the output is infinite and the causal set is empty. Thus every causal set is constant on a given set of the partition, and every finite restriction is linear. 
    To bound the cardinality of $\mathcal{A}$, we observe that, for every $y\in Q_G$, we have
    \begin{align*}
        g_1(y)=(\sum_{j\in G_1}W_{ij}^{(\ell)}-\vartheta)\eta_1(y).
    \end{align*}
    Due to $\eta_1(y)>0$, the sign of $g_1$ is constant on $Q_G$. If it is nonnegative, we have $A_1=Q_G$, and otherwise $A_1=\emptyset$, leaving at most the sets $A_2,\dots, A_q,A_\infty$. Hence $|\mathcal{A}|\leq q=p(q)\leq p(|J|)$, which verifies Assumption~\ref{ass:model}(i) and the additional partition bound requirement of \Cref{cor:hyper_general}.

    For Assumption~\ref{ass:model}(ii), the same observation allows us to omit $g_1$ from the test functions, as its sign is always the same. The signs of $g_2,\dots,g_q$ determine membership in $A_1,\dots,A_q,A_\infty$. These $q-1=h(q)\leq h(|J|)$ linear functions therefore verify Assumption~\ref{ass:model}(ii) and the additional test bound requirement of \Cref{cor:hyper_general}.

    For Assumption~\ref{ass:model}(iii)(H), let $\lambda>0$ and $y\in Q_G$. Then we have $\lambda y\in Q_G$ and $g_k(\lambda y)=\lambda g_k( y)$ for every $k\in[q]$. 
    Thus $y$ and $\lambda y$ belong to the same set in $\mathcal{A}$. If the output is finite, they belong to $A_k$ for some $k\in[q]$, so \eqref{eq:LIF_fiC} yields equivariance of $f_{i,G}^{(\ell)}$. 
    Having verified that the requirements are met, application of \Cref{cor:hyper_general} establishes the claim.
\end{proof}

\begin{remark}
The $2^j$ factor could be removed by taking into account that it would be sufficient to determine partition membership by only distinguishing whether the test functions are negative or nonnegative.   
\end{remark}

\subsection{Upper bound nLIF with Dirac-delta synapses}
\begin{highlightbox}
\begin{theorem}\label{thm:nLIF_Dirac}
    Let $\diamond\in\{<,\leq\}$ and let $\Phi$ be a single-spike neural network using nLIF neurons with Dirac-$\delta$ synapses according to \Cref{def:instant} with depth $L$ and layer dimensions $N_0,\dots,N_L$. 
    Then
    \begin{align*}
        \mathfrak{C}_{\diamond}(\Phi) \leq \sum_{q\in[N_0]}q!{N_0\brace q} 
    \end{align*}
    where ${N_0\brace q}$ denotes Stirling numbers of the second kind.
\end{theorem}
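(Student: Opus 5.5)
The plan is to obtain the statement as a special case of \Cref{cor:hyper_general}. For nLIF neurons with Dirac-$\delta$ synapses the causal structure depends only on the weak order of the incoming spike times. I will therefore verify \Cref{ass:model} together with the two additional requirements of the corollary using $\rho(t)=t$, $D=\R$, the symmetry condition (T), $p\equiv1$ and $h\equiv0$. Then every layer factor in the corollary equals
\[
\min\Bigl\{1^{N_\ell},\ \sum_{j=0}^{d_{\ell-1}}2^j{0\choose j}\Bigr\}=1,
\]
and the bound collapses to $\sum_{q\in[N_0]}q!{N_0\brace q}$, as claimed.

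\textbf{Transformed spike-time map.} For $\ell\in[L]$ and $i\in[N_\ell]$ I define
\[
f_i^{(\ell)}(y):=\min\Bigl\{y_k\colon y_k<\infty,\ \sum_{j\colon y_j\le y_k}W^{(\ell)}_{ij}\ge\vartheta\Bigr\},\qquad \min\emptyset:=\infty.
\]
This agrees with \Cref{def:instant}: the potential $u$ is piecewise constant, it is zero before the first arrival, and it changes only at arrival times. Hence $f_i^{(\ell)}(y)\geq\min_j y_j$, which is \eqref{eq:nfbfi}, and every finite output is one of the incoming coordinates, which is requirement (i) of the corollary.

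\textbf{Partition and test functions.} Fix a nonempty $J$ and a weak order $G=(G_1,\dots,G_q)$ of $J$. Set $K_k:=G_1\cup\dots\cup G_k$ and $W_k:=\sum_{j\in K_k}W^{(\ell)}_{ij}$. For every $y\in Q_G$ the output is $\eta_k(y)$, where $k$ is the smallest index with $W_k\ge\vartheta$; if no such index exists, the output is $\infty$. This index depends only on $G$ and the weights, not on $y$. Consequently:
\begin{itemize}
\item The trivial partition $\mathcal A=\{Q_G\}$ suffices, so $p(q)=1$.
\item On $Q_G$ the restriction $f_{i,G}^{(\ell)}$ is either identically $\infty$ or equal to the linear coordinate projection $y\mapsto y_{j_k}$.
\item The causal set is the constant $K_{k-1}$ or $K_k$, depending on $\diamond$ (and $\emptyset$ if the output is infinite).
\end{itemize}
With $H=0$ test functions, \Cref{ass:model}(ii) holds vacuously: there is only the empty sign pattern, and it is mapped into the single set of $\mathcal A$. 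For (T), note that $Q_G$ is translation invariant, and on it $f_{i,G}^{(\ell)}(y+c\mathbf 1_J)=y_{j_k}+c$. The bounds $|\mathcal A|\le p(q)$ and $H\le h(q)$ required by the corollary hold trivially, and $p$, $h$ are nondecreasing.

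\textbf{Main point to check and fallback.} The only step needing care is that the corollary indeed allows $h\equiv0$, so that each layer factor is exactly $1$. Here one uses the convention ${0\choose j}=0$ for $j\ge1$ and the fact that $\N_0$ is the codomain of $h$. If the bookkeeping there felt fragile, I would instead give the direct argument that the corollary's proof specialises to. Partition $\R^{N_0}$ into the $\sum_q q!{N_0\brace q}$ weak order domains of the input $b$. Then show by induction over $\ell$ that on each such domain:
\begin{itemize}
\item every finite spike time $t^{(\ell)}_i(b)$ equals a fixed input coordinate $b_{j}$;
\item the set of finite neurons is fixed;
\item the weak order of the layer-$\ell$ spike times is fixed.
\end{itemize}
Given these, the causal sets of layer $\ell+1$ are determined by the argument above. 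Hence each weak order domain lies in a single causal piece, which yields the bound.
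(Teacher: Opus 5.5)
Your proposal is correct, but your main route differs from the paper's. The paper proves \Cref{thm:nLIF_Dirac} directly, by essentially the induction you list as a fallback. On each weak-order domain $Q_G\subseteq\R^{N_0}$, it shows layer by layer that the set of finite spike indices and their weak order are fixed. The reason is that the potential at the arrival of the $k$-th incoming block is the input-independent partial weight sum $\sum_{s\le k}\sum_{j\in H_s}W^{(\ell)}_{ij}$. This fixes the firing block, the causal set and the output time, so each $Q_G$ lies in one causal piece. (The paper only tracks the weak order, not which input coordinate each spike equals; the former suffices.)

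Your primary argument instead treats the model as a degenerate instance of \Cref{cor:hyper_general} with the trivial partition $\mathcal A=\{Q_G\}$. Your verifications are sound: the formula for $f_i^{(\ell)}$, polyhedrality and nonemptiness of $Q_G$ for $D=\R$, condition (ii) holding vacuously with $H=0$, translation equivariance for (T), and the selection hypothesis. Your worry about $h\equiv0$ is unnecessary. Once $p\equiv1$, the first entry of each layer's minimum is $1^{N_\ell}=1$, and the second entry is at least $1$ because of its $j=0$ term. So every layer factor equals $1$ for any admissible $h$.

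Your route buys unification: nLIF with Dirac-$\delta$ synapses becomes the $p\equiv1$ case of the same corollary that yields \Cref{thm:LIF}. This makes transparent that depth- and width-independence comes from the causal structure being constant on each input weak-order domain. The paper's route buys self-containedness. It avoids the polyhedral-partition, affine-dimension and symmetry bookkeeping of \Cref{thm:hyper_general} and \Cref{cor:hyper_general}, none of which does any real work when the partition is trivial.
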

\end{highlightbox}
\begin{proof}
    Fix a weak order $G$ of $[N_0]$. We show inductively that, on $Q_G\subseteq\R^{N_0}$, the indices of finite spike times and their weak order are fixed in every layer, and all causal sets are constant. At the input layer, all spike times are finite and their weak order is $G$.
    For the induction step, fix $\ell\in[L]$. If all incoming spike times are infinite, every neuron in layer $\ell$ has potential zero, infinite spike time, and empty causal set. 
    Otherwise, let $H=(H_1,\dots,H_q)$ be the fixed weak order of the finite incoming spike times. By \Cref{def:instant}, the potential of neuron $i\in[N_\ell]$ at the arrival of block $H_k$, $k\in[q]$, is
    \begin{align*}
        \sum_{s\in[k]}\sum_{j\in H_s} W_{ij}^{(\ell)}.
    \end{align*}
    This value is independent of the input $b\in Q_G$. Since the potential is zero before the first arrival and constant between arrivals, the neuron spikes at the first block for which this value is at least $\vartheta$.
    If no such block exists, its spike time is infinite and its causal set is empty throughout $Q_G$. 
    Otherwise, the first such index $k_i$is independent of $b\in Q_G$, and, for every $b\in Q_G$,
    \begin{align*}
        \mathcal C_{i,\diamond}^{(\ell)}(b)=
        \begin{cases}
        \bigcup_{r\in[k_i-1]}H_r,&\diamond={<},\\[2mm]
        \bigcup_{r\in[k_i]}H_r,&\diamond={\le}.
\end{cases}
    \end{align*}
    Its spike time equals the common incoming spike time of $H_{k_i}$.
    Thus the indices of finite outputs are fixed. Moreover, their weak order is fixed, since outputs selecting the same block coincide, and outputs selecting earlier blocks occur earlier. This completes the induction. 

    Every $Q_G$ is therefore contained in one causal piece. There are ${N_0 \brace k}$ partitions of $[N_0]$ into $k$ nonempty sets, each admitting $k!$ orders. As the corresponding weak-order domains partition the input space, the claim follows.
\end{proof}

%% file: content/acknowledgments.tex
D.D. acknowledges funding through the Horizon Europe's Marie Skłodowska-Curie Actions (MSCA) Project 101103062 (BASE) and the European Space Agency's OSIP Co-funded Research Programme (4000152390). 
D.E. was funded by the Horizon Europe's Marie Skłodowska-Curie Actions (MSCA) Project 101152591 (CompAS).

%% file: content/appendix.tex
\subsection{nLIF neuron with approximately ReLU-type response}\label[app]{app:nlif-lower-bound}

Fix a finite-width and finite-depth SNN with ReLU-type response neurons from the lower-bound construction in the introduction. 
For every causal piece counted in the lower-bound construction, choose one representative input in its interior. 
Consider now an nLIF neural network with exponential synapses of the same architecture (including weights and thresholds).
The response function of nLIF neurons is (\cref{def:nLIF})
\begin{equation*}
g_\tau(s)=(1-e^{-s/\tau})\mathbf 1_{\{s\ge0\}} = (1-e^{-\max(0,s)/\tau})\,.
\end{equation*}
Keep the thresholds unchanged, compress all input spike times by $\varepsilon>0$, and multiply each synaptic weight by $\tau/\varepsilon$. 
In rescaled time, the effective response is
\begin{equation}\label{eq:si_convergence}
k_\varepsilon(s)
=\frac{\tau}{\varepsilon}(1-e^{-\varepsilon \max(0,s)/\tau}),
\qquad
\sup_{|s|\le B}|k_\varepsilon(s)- \max(0,s)|
\le \frac{\varepsilon B^2}{2\tau} \,.
\end{equation}
Thus $k_\varepsilon$ converges uniformly to the ReLU-type response on bounded intervals.
To show this, we use the following identity:
\begin{equation}
    \text{For every $u \geq 0$:} \quad 
u-(1-e^{-u})=\int_0^u(1-e^{-v})\,dv \,.
\end{equation}
Since $0\le1-e^{-v}\le v$ for $v \geq 0$, we find
\begin{equation}
0\le u-(1-e^{-u})\le\int_0^u v\,dv=\frac{u^2}{2} \,.
\end{equation}
Substituting $u=\varepsilon \max(s,0)/\tau$, multiplying the inequality by $\tau/\varepsilon$, and assuming $|s| \leq B$, \cref{eq:si_convergence} follows immediately. The chosen ReLU-response network has nonnegative weights and positive thresholds, and all its neurons have finite spike times. Its potentials and those of the approximating nLIF network are therefore nondecreasing, and each ReLU-response potential crosses its threshold with positive slope.
The rescaled nLIF spike times converge to those of the ReLU-response network layer by layer. 
Convergence of the incoming spike times and the response estimate above imply uniform convergence of the rescaled nLIF potentials on bounded time intervals. 
If the corresponding ReLU-response neuron fires at time $T$, its potential has positive slope there. 
Thus, for any sufficiently small $\rho>0$, its potential is below threshold at $T-\rho$ and above threshold at $T+\rho$.
Both endpoint values have a positive distance from the threshold.
For sufficiently small $\varepsilon$, the approximation error is smaller than these distances, so the rescaled nLIF potential remains below threshold at $T-\rho$ and above threshold at $T+\rho$.
By continuity and monotonicity, the rescaled nLIF spike time therefore lies in $(T-\rho,T+\rho)$, proving convergence to $T$.

At each selected input, the arrival of every relevant presynaptic spike occurs strictly before or strictly after the receiving neuron fires. 
Convergence of the rescaled spike times preserves these orderings for sufficiently small $\varepsilon>0$, and hence preserves the corresponding causal subnetwork. 
Since there are finitely many selected inputs and comparisons, a single positive $\varepsilon$ can be chosen to preserve them all. 
The nLIF network therefore realises at least the same number of distinct causal subnetworks.

\subsection{Neurons with Dirac-$\delta$ synapses}\label[app]{sec:Dirac_lower}
The bound for nLIF networks with Dirac-$\delta$ synapses is attained exactly. Indeed, for any input dimension $n\in\mathbb N$ and depth $L\in\mathbb N$, choose all layer widths equal to $n$ and $W^{(\ell)}=\vartheta I_n$ for $\ell\in[L]$. Then $t^{(\ell)}(b)=b$, so the final-layer causal sets determine the complete weak order of the input. Consequently,
\begin{align*}
    C_\diamond(\Phi)
    =\sum_{q=1}^{n}q!
    \left\{\begin{matrix}n\\q\end{matrix}\right\},
    \qquad \diamond\in\{<,\le\}.
\end{align*}

For LIF networks with Dirac-$\delta$ synapses, the width exponent can be shown to be sharp. 
To do so we employ a type of bit-extraction argument, which has previously been used in neural network approximation theory~\cite{liang2017why} and for spiking neural networks to emulate activation functions~\cite{stockl2021optimized}.
For fixed input dimension $d+1$ and depth $L$, with $d,L\in\mathbb N$, the maximal number of causal pieces among networks with common time constant and subsequent widths at most $N$ is then $\Theta_{d,L}(N^{dL})$. 

To establish the lower bound, fix an integer $m\ge2$ and consider inputs $(0,T_1,\ldots,T_d)$, where $T_a>0$ for $a\in[d]$. 
Put $x_a=e^{-T_a/\tau}$. 
We construct a network where layer $r\in[L]$ extracts the $r$-th base-$m$ digit $D_{a,r}$ of each transformed input $x_a$.
For each \(a\in[d]\), layer $r$ contains $m-1$ new neurons whose spike times are $T_a$ when $D_{a,r}\ge k$, and $\infty$ otherwise, for $k\in[m-1]$. 
For example, with $m=4$ and $x_a=0.6$, two quarters fit into $x_a$ but three do not, so $D_{a,1}=2$.
After subtracting $2/4$ from $x_a$, the remainder is $0.1$, into which one sixteenth fits but two do not,
so $D_{a,2}=1$, i.e., $x_a = 2/m + 1/m^2 + \mathrm{remainder}$.
Reference spikes and previously extracted digits are copied through subsequent layers using dummy neurons and connections with weights equal to the threshold, ensuring identical spike times.

Specifically, the neuron in layer $r$ testing $D_{a,r}\ge k$, for $k\in[m-1]$, has the following incoming connections from the preceding layer:
\begin{itemize}
    \item one connection from the copy of the reference neuron firing at time $0$,
    with weight $\vartheta/2$,
    \item one connection from the copy of the input neuron firing at time $T_a$,
    with weight $\vartheta-\vartheta k/(2m^r)$,
    \item connections for each $s\in[r-1]$, from each of the $m-1$ retained test neurons encoding $D_{a,s}$, with weight
    $-\vartheta/(2m^s)$.
\end{itemize}
All other incoming weights are zero.
Before time $T_a$, this neuron's potential is below the threshold, as only the reference spike at time $0$ arrived so far.
At time $T_a$, the neuron's membrane potential is
\begin{align*}
    u(T_a)= \tfrac{\vartheta}{2}e^{\frac{-T_a}{\tau}} + (\vartheta-\tfrac{\vartheta k}{2m^r})  - \sum_{s\in[r-1]}\tfrac{\vartheta D_{a,s}}{2m^s} 
    =\vartheta+\tfrac{\vartheta}{2}
    \left(
        x_a-\sum_{s=1}^{r-1}\tfrac{D_{a,s}}{m^s}
        -\tfrac{k}{m^r}
    \right).
\end{align*}
By definition of the $r$-th base-$m$ digit,
\[
\frac{D_{a,r}}{m^r}
\le x_a-\sum_{s<r}\frac{D_{a,s}}{m^s}
<\frac{D_{a,r}+1}{m^r}.
\]
Hence $u(T_a)\ge\vartheta$ if and only if $D_{a,r}\ge k$ (otherwise the term in brackets becomes negative).
Since the neuron cannot fire before $T_a$ and its potential
subsequently decays towards zero, it fires at $T_a$ precisely when this inequality holds, and otherwise remains silent.

The $d$ branches operate independently, with all cross-branch weights zero. 
Every prescribed digit array $(b_{a,r})\in\{0,\ldots,m-1\}^{dL}$
is realised by choosing
\[
x_a=\sum_{r=1}^{L}\frac{b_{a,r}}{m^r}+\frac{1}{2m^L},
\qquad T_a=-\tau\log x_a.
\]
The final layer contains the last digit tests and copies of all earlier tests. 
Different digit arrays therefore yield different firing/silence patterns in this layer. 
Every firing test or copy has a nonempty exclusive causal set, whereas non-firing neurons have empty causal sets under our definition. 
Consequently,
\begin{align*}
    C_\diamond(\Phi)\ge m^{dL} \qquad \text{for} \qquad \diamond\in\{<,\le\} \,.
\end{align*}
Layer $r$ has width $N_r=d+1+dr(m-1)$. 
For $N\ge d+1+dL$, choosing
\begin{align*}
m=1+\left\lfloor\frac{N-d-1}{dL}\right\rfloor \,,
\end{align*}
keeps all subsequent widths at most $N$ and gives the lower bound $\Omega_{d,L}(N^{dL})$, matching the upper bound for fixed $d,L$.

For a concrete example, take $d=1$, $m=2$, and $L=2$.
The digit-test neuron in the first layer has potential

\begin{align*}
u_1(T)=\vartheta+\frac{\vartheta}{2}\left(x-\frac12\right),
\end{align*}
while the second has potential
\begin{align*}
u_2(T)=\vartheta+\frac{\vartheta}{2}
\left(x-\frac{D_1}{2}-\frac14\right).
\end{align*}
Consider the two inputs $(0,\tau\log(8/3))$ and
$(0,\tau\log(8/5))$, corresponding to $x=3/8$ and $x=5/8$,
respectively:
\[
\begin{array}{c|c|c|c|c}
x & u_1(T)/\vartheta & D_1 & u_2(T)/\vartheta & D_2\\ \hline
3/8 & 15/16 & 0 & 17/16 & 1\\
5/8 & 17/16 & 1 & 15/16 & 0
\end{array}
\]
Thus the first two binary digits are $01$ and $10$.